\DeclarePairedDelimiterX{\infdivx}[2]{(}{)}{%
  #1\;\delimsize\|\;#2%
}
\newcommand{\dgps}{\textsc{dgp}s\xspace}
\newcommand{\dgp}{\textsc{dgp}\xspace}
\newcommand{\gps}{\textsc{gp}s\xspace}
\newcommand{\gp}{\textsc{gp}\xspace}
\newcommand{\elbo}{\textsc{elbo}\xspace}
\newcommand{\elbos}{\textsc{elbo}s\xspace}
\crefname{appsec}{appendix}{appendices}
\Crefname{appsec}{Appendix}{Appendices}
\newtheorem{remark}{Remark}
\newtheorem{theorem}{Theorem}
\newenvironment{customcondition}[1]
  {\innercustomcondition}
  {\endinnercustomcondition}
\newcommand{\calF}{\mathcal{F}}
\newcommand{\calGP}{\mathcal{GP}}
\newcommand{\calN}{\mathcal{N}}
\newcommand{\calO}{\mathcal{O}}
\newcommand{\calL}{\mathcal{L}}
\newcommand{\Zhat}{\hat{Z}}
\newcommand{\ztilde}{\tilde{z}}
\newcommand{\ftilde}{\tilde{f}}
\newcommand{\closer}[3]{{\kern-#1ex{#2}\kern-#3ex}}
\newcommand{\DKL}{D_{\text{KL}}\infdivx}
\DeclareMathOperator{\gradphi}{\nabla_{\phi}}
\mathchardef\mhyphen="2D
\DeclareMathOperator{\E}{\mathbb{E}}
\newcommand\defines{\stackrel{\mathclap{\normalfont\mbox{\tiny def}}}{=}}
\newcommand{\vae}{\textsc{vae}\xspace}
\newcommand{\iwvae}{\textsc{iwvae}\xspace}
\newcommand{\iwae}{\textsc{iwae}\xspace}
\newcommand{\iwaes}{\textsc{iwae}s\xspace}
\newcommand{\svi}{\textsc{svi}\xspace}
\newcommand{\iwvi}{\textsc{iwvi}\xspace}
\newcommand{\snr}{\textsc{snr}\xspace}
\newcommand{\snrs}{\textsc{snr}s\xspace}
\newcommand{\reg}{\textsc{reg}\xspace}
\newcommand{\dreg}{\textsc{dreg}\xspace}
\definecolor{dreg}{rgb}{1.0,0.498039215686274, 0.054901960784313725}
\definecolor{reg}{rgb}{0.12156862745098039,0.4666666666666667,0.7058823529411765}
\definecolor{highlight}{rgb}{0.17254901960784313, 0.6274509803921569, 0.17254901960784313}
\icmltitlerunning{
    On Signal-to-Noise Ratio Issues in Variational Inference for Deep Gaussian Processes
}
\begin{document}

\twocolumn[
\icmltitle{
    On Signal-to-Noise Ratio Issues in
    \\
    Variational Inference for Deep Gaussian Processes
}

\icmlsetsymbol{equal}{*}

\begin{icmlauthorlist}
\icmlauthor{Tim G. J. Rudner}{equal,oxford}
\icmlauthor{Oscar Key}{equal,ucl}
\icmlauthor{Yarin Gal}{oxford}
\icmlauthor{Tom Rainforth}{oxford}
\end{icmlauthorlist}

\icmlaffiliation{oxford}{Department of Computer Science, University of Oxford, Oxford, United Kingdon}
\icmlaffiliation{ucl}{Computer Science Department, University College London, London, United Kingdom}

\icmlcorrespondingauthor{Tim G. J. Rudner}{tim.rudner@cs.ox.ac.uk}

\vskip 0.3in
]

\printAffiliationsAndNotice{\icmlEqualContribution} %

\begin{abstract}
We show that the gradient estimates used in training Deep Gaussian Processes (\dgps) with importance-weighted variational inference are susceptible to signal-to-noise ratio (\snr) issues. Specifically, we show both theoretically and via an extensive empirical evaluation that the \snr of the gradient estimates for the latent variable's variational parameters \emph{decreases} as the number of importance samples \emph{increases}. As a result, these gradient estimates degrade to pure noise if the number of importance samples is too large. To address this pathology, we show how doubly reparameterized gradient estimators, originally proposed for training variational autoencoders, can be adapted to the \dgp setting and that the resultant estimators completely remedy the \snr issue, thereby providing more reliable training. Finally, we demonstrate that our fix can lead to consistent improvements in the predictive performance of \dgp models.
\end{abstract}

\section{Introduction}
\label{sec:introduction}

Deep Gaussian Processes (\dgps) are a powerful class of probabilistic models for supervised learning tasks~\citep{lawrence2013dgps,bui2016dgps,salimbeni2017dsvi}.
They are a multi-layer hierarchical generalization of conventional Gaussian processes (\gps,~\citet{ramussen2005gpbook}), themselves flexible non-parametric models that have seen a wide range of applications to a variety of machine learning problems~\citep{mockus1994bo,hensman15, wilson2015kernel}.
\dgps aim to retain the advantages of \gps---such as their well-calibrated uncertainty estimates and robustness to over-fitting---while also overcoming their limitations---such as the restricted form of their predictive distribution.
In essence, \dgps are an even more general and powerful class of models than traditional \gps~\citep{lawrence2013dgps}, allowing us to combine multi-layer function composition with a principled Bayesian approach to obtaining predictive uncertainties.

\begin{figure}[t!]
    \center
    \includegraphics[width=0.87\linewidth, keepaspectratio]{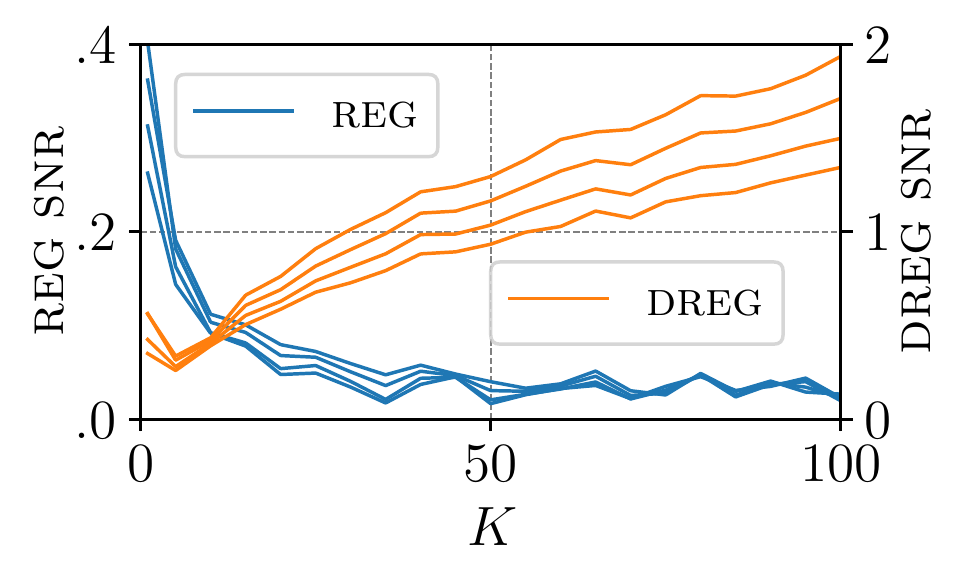}
    \caption{
    Signal-to-noise ratio (\snrs) in \iwvi for \dgps under the standard gradient estimators (\reg) and our improved estimator (\dreg) as a function of the number of importance samples ($K$).
    The different lines show \snrs for four different variational parameters.
    As $K$ increases, the \snr of \reg decreases and the \snr of \dreg increases.
    }
    \label{fig:snr_reg_example}
    \vspace*{-13pt}
\end{figure}

Unfortunately, this modeling power comes with a caveat: Exact inference in \dgps is intractable and one has to rely on approximate inference schemes to train them.
Typically this is done using stochastic variational inference (\svi) methods~\citep{hoffman2013svi,salimbeni2017dsvi}, which recast the inference problem into that of stochastic optimization of an evidence lower bound (\elbo) with respect to the parameters of an approximating distribution.
The resulting \svi-based approaches to training \dgps lead to state-of-the-art performance on a wide range of challenging problems requiring uncertainty quantification~\citep{salimbeni2017dsvi}.

Despite the successes of these approaches,~\citet{salimbeni2019iwvi} find that they do not perform well when applied to \dgp models that have been extended to include latent variables that allow them to represent non-Gaussianity and multimodality in data.
To address this, they propose an \textit{importance-weighted variational inference} (\iwvi;~\citet{burda2015iwae,mnih2016variational,domke2018iwae}) approach that provides both a tighter variational bound and lower-variance gradient updates by importance sampling the latent variables.
They show that this in turn leads to learning improved predictive distributions.

In this paper, we highlight a potential shortcoming of such approaches:
Tightening the bound can cause a deterioration in the \emph{signal-to-noise ratio} (\snr) of the gradient estimates associated with the variational parameters of the latent variables (see Figure~\ref{fig:snr_reg_example} for a demonstration).
Focusing on the particular example of \iwvi, we see that even though using more importance samples both tightens the bound and reduces the variance of the gradients, it can actually \emph{increase} the \emph{relative variance} of the gradients estimates compared to the true gradient, which is itself also tending to zero as we use more samples.
This weaker gradient signal leads to difficulties optimizing the variational parameters, a phenomenon analogous to known issues in training the inference networks of variational autoencoders~\citep{rainforth2018bounds}.
We demonstrate this \snr degradation both theoretically in the limit of using a large number of samples, and empirically for samples sizes often used in practice.

To address this degradation, we show how the \emph{doubly reparameterized gradient} (\dreg) estimator of~\citet{tucker2019dreg} can be adapted to the \dgp \iwvi bound.
The theoretical and empirical \snr of the resulting gradient estimator increases for \emph{all} parameters as the number of importance samples is increased, thereby providing more accurate gradient updates in the training of \dgps with latent variables.
It can be used as a drop-in replacement for the standard estimator at no computational or other cost.
Empirically, we find that this fix can lead to improvements in predictive performance.
In particular, we show that when the quality of latent variable variational approximation is important for prediction, then our estimator typically improves performance.\footnote{Our code is available at \url{https://github.com/timrudner/snr_issues_in_deep_gps}.}

To summarize, our core contributions are as follows:\vspace{-5pt}
\begin{itemize}[leftmargin=20pt]
\setlength\itemsep{0.1em}
    \item We highlight the presence of a signal-to-noise ratio (\snr) issue in the training of \dgps with \iwvi as the number of importance samples increases;
    \item We nullify this \snr issue by introducing a doubly reparameterized gradient (\dreg) estimator;
    \item We quantify the detrimental effect of this \snr deterioration on predictive performance, and show that our fix leads to predictive distributions conferring statistically significant improvements.
\end{itemize}

\newpage

\section{Preliminaries \& Related Work}
\label{sec:background}

\subsection{Latent Deep Gaussian Process Models}
\label{sec:dgp_model}

Let $X = [x_1, ..., x_N]^\top$ denote a collection of $N$ $D$-dimensional input data points and let $Y = [y_1, ..., y_N]^\top$ denote a collection of $N$ $P$-dimensional noisy observations.
A \dgp model is defined as the composition
\begin{align*}
	y_{n} = f^{(L)}(f^{(L-1)}(...f^{(1)}(x_{n}))...) + \varepsilon_{n},
\end{align*}
where $\varepsilon_{n} \sim \calN(0, \sigma^2 I_{P} )$, $L$ is the number of layers,
and each $f^{(\ell)}(f^{(\ell-1)})$ in the composition denotes a \gp evaluated at the draws from the previous layer, $f^{(\ell - 1)}$.
As per~\cite{salimbeni2019iwvi}, we further augment this model with a $\tilde{D}$-dimensional latent variable $z$ which extends the input space.
The resulting \textit{latent-variable} \dgp model is given by
\begin{align*}
	y_{n} = f^{(L)}(f^{(L-1)}(...f^{(1)}([x_{n}, z_{n}]))...) + \varepsilon_{n},
\end{align*}
where $\varepsilon_{n} \sim \calN(0, \sigma^2 I_{P} )$ and $z_{n} \sim \calN(0, I_{\tilde{D}})$.
For simplicity of exposition, our notation here and throughout the paper assumes a \dgp with two \gp layers, $f^{(1)}$ and $f^{(2)}$, but emphasize that our results apply to \dgps of \emph{any depth} (including latent shallow \gps) and we include experiments that use more than two layers.
Assuming a Gaussian likelihood with scalar noise $\sigma^2$,
\begin{align*}
    p(y_{n} \,|\, f^{(1)}&, f^{(2)}, z_{n}; x_{n}) \\
    &= \calN(y \,|\, f^{(2)}(f^{(1)}([x_n, z_n])), \sigma^2 I_{P}),
\end{align*}
the model then has joint distribution
\begin{align*}
    &p(y_{n} , f^{(1)}, f^{(2)}, z_{n} ; x_{n}) ~~~~~~~~~~~~~~~~~~~~~
    \\
    &~~=p(y_{n} \,|\, f^{(1)}, f^{(2)}, z_{n} ; x_{n}) \, p(f^{(2)} | f^{(1)}) \, p(f^{(1)})  p(z_{n}),
    \\
    &\text{where} ~~~~~~~
    f^{(1)} \sim \calGP(f^{(1)} \,|\, m_1, K_1),
    \\
    &~~~~~~~~~~~~~~~~\, f^{(2)} \sim \calGP(f^{(2)} \,|\, m_2, K_2),
    \\
    &~~~~~~~~~~~~~~~~~~\,\, z_{n} \sim \calN( 0, I_{\tilde{D}}). 
\end{align*}
Here, $m_1(\cdot)$ and $m_2(\cdot)$ are mean functions, and $K_1(\cdot, \cdot)$ and $K_2(\cdot, \cdot)$ covariance functions.
For simplicity of notation, we will drop any subscripts from identity matrices in the remainder of the paper.

\subsection{Importance-Weighted Variational Inference for Latent-Variable Deep GPs}
\label{sec:iwvi-dgp}

Posterior inference in \dgp models is generally intractable, which necessitates the use of approximate inference methods.
For the \dgp model presented above,~\citet{salimbeni2019iwvi} proposed an \textit{importance-weighted variational inference} (\iwvi) approach, inspired by~\citet{burda2015iwae} (see Section~\ref{sec:iwvae_snr}), to obtain a tractable evidence lower bound (\elbo) that can be used to learn a variational approximation of the posterior.
Specifically, they present a \textit{partially-collapsed} \elbo that provides both lower-variance estimates and a tighter bound as the number of importance samples $K$ increases, namely,
\begin{align}
    \calL_{K}
    \defines&\,
    \E \left[  \sum_{n}\log \frac{1}{K} \sum_{k=1}^K \frac{\calF(x_{n}, y_{n}, f^{(1)}_k, z_{n, k}) p(z_{n, k})}{ q_{\phi}(z_{n, k})} \right]
    \nonumber
    \\
    &\quad - \sum_{\ell=1}^{2} \DKL*{q(f^{(\ell)})}{p(f^{(\ell)})},
    \label{eq:iwvi_elbo_collapsed}
    \\
    \text{where} \nonumber
    \\
    \calF(&x_{n}, y_{n}, f^{(1)}_k, z_{n, k}) 
    \nonumber
    \\
    &\defines
    \exp\left( \E _{q(f^{(2)})}\left[ \log p(y_{n} \,|\, f^{(2)}, f^{(1)}_k, z_{n, k} ) \right]\right),
    \nonumber
\end{align}
which can be calculated analytically for a Gaussian likelihood;  the expectation is over \mbox{$(f^{(1)}_{1:K}, z_{n, 1:K})$}, with \mbox{$z_{n, k} \sim q_{\phi}(z_{n})$} and \mbox{$p(z_{n})=\mathcal{N}( 0, I)$}; and \mbox{$f^{(1)}_k$} are samples from the variational distribution $q(f^{(1)}_{k})$.
See~\Cref{appsec:partially-collapsed_bound_iwvi-dgp} for further details.

\subsection{Importance Weighted Autoencoders and Their Signal-to-Noise Ratio Issues}
\label{sec:iwvae_snr}

\citet{burda2015iwae} propose importance weighted autoencoders (\iwaes) as a means of providing a tighter variational lower bound for \vae training given by
\begin{gather}
    \calL_{\text{IWAE}}
    \defines
    \sum\nolimits_{n=1}^N \E \left[\log \frac{1}{K} \sum\nolimits_{k=1}^K w_{n, k} \right],
    \label{eq:iwvi_vae_bound}
    \\
    \text{where} \quad w_{n, k} \defines \frac{p_\theta(x_{n}, z_{n, k})}{q_\phi(z_{n, k} \,|\, x_{n})},
    \nonumber
\end{gather}
$K$ is the number of importance samples, $N$ is the number of data points, $p_\theta(x_{n}, z_{n, k})$ represents the generative model, $q_\phi(z_{n, k} \,|\, x_{n})$ the amortized inference network, and the expectation is over $\prod_{k=1}^K q_\phi(z_{n, k} \,|\, x_{n})$. 
Training is done by optimizing $\calL_{\text{IWAE}}$ with respect to both $\theta$ and $\phi$ using stochastic gradient ascent.
Assuming that reparameterization of $z_{n, k}$ in $\calL_{\text{IWAE}}$ is possible~\citep{KingmaVAE}, and that it is possible to take mini-batches of the data, the gradient estimates for data point $x_n$ are given by
\begin{gather}
    \Delta_{n, M, K}^\text{IWAE}(\theta,\phi) \defines \frac{1}{M} \sum_{m=1}^{M} \nabla_{\theta,\phi} \log \frac{1}{K} \sum_{k=1}^{K} w_{n, m, k},
    \nonumber %
    \\
    \text{where} \quad z_{n, m, k} \stackrel{\text{i.i.d.}}{\sim} q_{\phi}(z_{n} \,|\, x_{n})
    \nonumber
\end{gather}
and $M$ is the number of samples used for estimation of the outer expectation.
\citet{burda2015iwae} show that increasing $K$ provably tightens the variational lower bound $\calL_{\text{IWAE}}$, and~\citet{rainforth2018nesting} confirm that it also reduces the variance of the resulting estimates.

However,~\citet{rainforth2018bounds} show that the \emph{relative} variance of the gradient estimates with respect to $\phi$ actually \emph{increases} with the number of importance samples $K$:
As $K$ increases, the expected gradient, $\E [\Delta_{n, M, K}^\text{IWAE}(\phi)]$, tends to zero faster than its standard deviation decreases.
To formalize this, they introduce the notion of a signal-to-noise ratio (\snr)
\begin{align*}
    \operatorname{SNR}_{n, M, K}^\text{IWAE}(\psi)=\frac{\left|\E\left[\Delta_{n, M, K}^\text{IWAE}(\psi)\right]\right|}{\sqrt{\operatorname{Var}\left[\Delta_{n, M, K}^\text{IWAE}(\psi)\right]}},
\end{align*}
where $\psi \in \{\phi,\theta\}$, and show that
\begin{align*}
  \operatorname{SNR}_{n, M, K}^\text{{IWAE}}(\theta)
  &=
  \calO\left({\sqrt{MK}}\right)
  \\
  \operatorname{SNR}_{n, M, K}^\text{{IWAE}}(\phi)
  &=
  \calO\left({\sqrt{M/K}}\right),
\end{align*}
such that increasing $K$ decreases the \snr of the inference network, thereby potentially removing its ability to train effectively.
As well as being problematic in its own right, this pathology can further have a knock-on effect on the training of the generative network.

\section{Signal-to-Noise Issues for Deep GPs}
\label{sec:snr_issue}

Given the similarity between the variational bounds in \iwaes and \iwvi for \dgps, it is natural to ask whether the \snr issues in \iwaes also occur in \iwvi for \dgps.
We note here that for \iwaes, this pathology affected the encoder (i.e.,~$\phi$) but not the decoder (i.e.,~$\theta$) and so it is not immediately obvious how these results will translate; \snr issues are not a universal for all gradients of importance-weighted variational bounds.

Inspecting~\Cref{eq:iwvi_elbo_collapsed}, we see that the \dgp setting shares a number of similarities, but also features some key differences to the \iwae setting.
On one hand, both $\calL_{K}$ (\Cref{eq:iwvi_elbo_collapsed}) and $\calL_{\text{IWAE}}$ (\Cref{eq:iwvi_vae_bound}) contain expectations over a logarithm of an importance sampling estimate and both make use of a variational approximation.  
On the other hand, $\calL_{K}$ contains an additional KL term, preventing it from ever achieving a tight bound even when the variational approximation over the latent variable is optimal~\citep{salimbeni2019iwvi}.
Furthermore, it contains the function $\calF(x_{n}, y_{n}, f^{(1)}_{k}, z_{n, k})$ instead of a plain likelihood function, and the outer expectation contains additional stochasticity from the \dgp layers.
More specifically, the posterior predictive distribution \emph{of each layer} needs to be inferred and the approximations produced are functions of the layer's hyperparameters, its variational parameters, and the previous layer's outputs.
Establishing the implications of these differences will be key to unearthing whether \dgps suffer from \snr issues.

To assess if \snr deterioration of the gradient estimates occurs for $\calL_K$ as $K$ increases, we must first identify which gradients are potentially problematic.
Considering~\Cref{eq:iwvi_elbo_collapsed}, it is relatively straightforward to see that, analogously to the decoder in a \vae, the gradients for the \dgp hyperparameters will remain non-zero even as \mbox{$K\to\infty$} and thus will not be susceptible to \snr issues. 
Similarly, it is reasonably straightforward to show that the parameters of $q(f^{(\ell)})$ will not be problematic. 
However, there is a potential for problems to occur for the parameters of the variational distribution over the latent variables, $q_{\phi}(z)$, with variational parameters $\phi$.

It turns out that such a problem does indeed occur for the gradient estimates w.r.t.~$\phi$.
To demonstrate this, the first key step is to reparameterize \emph{all} of the stochastic quantities in the variational bound in~\Cref{eq:iwvi_elbo_collapsed}. 
We can then express the $M$-sample Monte Carlo estimate for the gradient associated with a single data point $x_{n}$ as
\begin{align}
\label{eq:iwvi_dgp_reparameterization_gradient_estimator}
    \Delta_{n, M, K}^{\text{{DGP}}}(\phi) \defines&\, \frac{1}{M} \sum_{m=1}^M \gradphi \log\frac{1}{K} \sum_{k=1}^K w_{n, m, k}, \\
    \text{where} ~~  w_{n, m, k}
    &\defines
    \frac{\calF(x_{n}, y_{n}, f^{(1)}_{m,k}, z_{n, m, k}) p(z_{n, m, k})}{ q_{\phi}(z_{n, m, k})}, \nonumber
    \\
    z_{n, m, k} &\sim q_{\phi}(z_{n}), \quad f^{(1)}_{m,k} \sim q(f^{(1)}). \nonumber
\end{align}
We are now ready to present our main result which shows that \iwvi for \dgps suffers from an \snr issue analogous to that seen in \iwaes.
\begin{theorem}[Asymptotic \snr in \iwvi for \dgps]\label{thm:snr_iwvi-dgp}
Let $w_{n, m, k}$ be as defined in~\Cref{eq:iwvi_dgp_reparameterization_gradient_estimator}.
Assume that when $M=K=1$, the expectation and variance of the gradients estimates in~\Cref{eq:iwvi_dgp_reparameterization_gradient_estimator} are non-zero, and that the first four moments of $w_{n, 1,1}$ and $ \gradphi w_{n, 1,1}$ are all finite and that their variances are also non-zero.
Then the signal-to-noise ratio of each $\Delta_{n, M, K}^{\text{{DGP}}}(\phi)$ converges at the following rate
\begin{align}
\begin{split}
\label{eq:full_bound}
    &\hspace{-4pt}\operatorname{SNR}_{n,M, K}^{\textrm{\emph{DGP}}}(\phi) \\
    &=\sqrt{M}\left|\frac{\gradphi \operatorname{Var}\left[w_{n, 1,1}\right]+\calO\left(\frac{1}{K}\right)}{2 Z_n \sqrt{K} \sqrt{\operatorname{Var}\left[\gradphi w_{n, 1,1}\right]}+\calO\left(\frac{1}{\sqrt{K}}\right)}\right|,
\end{split} \\
&= \calO\left({\sqrt{M/K}}\right), \nonumber
\end{align}
where $Z_n \defines \E [w_{n,1,1}]$ is a lower bound on the marginal likelihood of the $n^{\textnormal{th}}$ data point.
\end{theorem}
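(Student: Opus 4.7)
The plan is to follow the template of the IWAE inference-network analysis of \citet{rainforth2018bounds}, suitably extended to the partially collapsed bound $\calL_K$. Since the outer draws $(f^{(1)}_{m,1:K}, z_{n,m,1:K})$ for different $m$ are independent and identically distributed, $\E[\Delta^{\text{DGP}}_{n,M,K}(\phi)] = \E[\Delta^{\text{DGP}}_{n,1,K}(\phi)]$ and $\operatorname{Var}[\Delta^{\text{DGP}}_{n,M,K}(\phi)] = \operatorname{Var}[\Delta^{\text{DGP}}_{n,1,K}(\phi)] / M$, so the overall $\sqrt{M}$ pulls out immediately and I only need to prove the $M=1$ version. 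The core structural observation to exploit is that, once $z_{n,k}$ is reparameterized as $g(\epsilon_{n,k}, \phi)$, the quantity $Z_n = \E[w_{n,1,1}]$ is independent of $\phi$: writing the expectation in the $\epsilon$-parametrization and changing variables back to $z$, the $q_\phi(z)$ in the denominator of $w$ cancels the sampling density, leaving $\E_{q(f^{(1)})}\!\bigl[\int \calF(x_n, y_n, f^{(1)}, z)\, p(z)\, dz\bigr]$. Differentiating through (justified under the moment assumptions) yields $\E[\gradphi w_{n,1,1}] = 0$, and this vanishing first moment is exactly what forces the signal to decay faster than the noise.

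With $\hat{Z}_K := K^{-1}\sum_k w_{n,1,k}$, I would obtain the signal by a standard delta-method expansion of $\log \hat Z_K$ about $Z_n$: the first-order term vanishes in expectation because the $w_{n,1,k}$ are i.i.d.\ with mean $Z_n$, the second-order term contributes $-\operatorname{Var}[w_{n,1,1}] / (2K Z_n^2)$, and higher-order remainders are controlled by the finite fourth moment hypothesis, giving $\E[\Delta^{\text{DGP}}_{n,1,K}(\phi)] = -\gradphi \operatorname{Var}[w_{n,1,1}] / (2 K Z_n^2) + \calO(K^{-2})$ after exchanging $\gradphi$ with $\E$. The noise would come from writing $\gradphi \log \hat Z_K = (\gradphi \hat Z_K) / \hat Z_K$, expanding $1/\hat Z_K = 1/Z_n + \calO_p(K^{-1/2})$, and recognizing $\gradphi \hat Z_K$ as the sample mean of i.i.d.\ terms with mean zero and variance $\operatorname{Var}[\gradphi w_{n,1,1}]$; this yields $\operatorname{Var}[\Delta^{\text{DGP}}_{n,1,K}(\phi)] = \operatorname{Var}[\gradphi w_{n,1,1}] / (K Z_n^2) + \calO(K^{-3/2})$. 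Substituting the leading-order signal and noise standard deviation into the ratio and reinstating the $\sqrt{M}$ reproduces~\eqref{eq:full_bound}, and the $\calO(K^{-1})$ and $\calO(K^{-1/2})$ corrections in the numerator and denominator are precisely the next-order terms in these two expansions.

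The main obstacle, and the one place where the DGP setting introduces something beyond the IWAE calculation, is to justify the exchange of $\gradphi$ with the outer expectation, which now also encompasses the randomness over $f^{(1)}_{m,k} \sim q(f^{(1)})$, and to bound the delta-method remainders uniformly in that extra randomness. Because $f^{(1)}_{m,k}$ is independent of $\phi$, enlarging the probability space to include it leaves the weights $w_{n,1,k}$ i.i.d., and the assumed finiteness of the first four moments of $w_{n,1,1}$ and $\gradphi w_{n,1,1}$ is exactly what is required to dominate the remainders (via Cauchy--Schwarz) and to apply Fubini when differentiating under the integral. The extra $\operatorname{KL}[q(f^{(\ell)}) \| p(f^{(\ell)})]$ term in $\calL_K$ contributes nothing to the $\phi$-gradient since it has no $\phi$-dependence, so it can be ignored throughout the analysis.
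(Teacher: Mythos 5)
Your proposal follows essentially the same route as the paper: the crux in both is establishing that $Z_n = \E[w_{n,1,1}]$ has no $\phi$-dependence despite $z$ entering the predictive distribution of $f^{(1)}$ (resolved by reparameterizing the layer's stochasticity so the importance weights cancel $q_\phi$), after which the paper simply invokes the mean and variance expansions of \citet{rainforth2018bounds}, which you re-derive via the delta method. The only, immaterial, discrepancy is your $\calO(K^{-3/2})$ variance remainder versus the $\calO(K^{-2})$ the paper inherits from that reference; both give the same leading-order rate $\calO(\sqrt{M/K})$.
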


\begin{proof}
We start our proof by noting that the average of the importance weights $w_{n,m,k}$ for a given $n$ and $m$,
\begin{align}\label{eq:Zhat_iwvi-dgp}
	\Zhat_{n,m,K} &\defines \frac{1}{K} \sum\nolimits_{k=1}^K w_{n,m,k} ,
\end{align}
is an unbiased Monte Carlo estimator of $Z_n$ as follows
\begin{align}
\begin{split}\label{eq:iwvi_dgp_Z}
    &\E [\hat{Z}_{n,m,K}]
    = \E [w_{n,m,1}]
    = \E [w_{n,1,1}]
    = Z_n.
\end{split}
\end{align}
Moreover, we also have $\lim_{K \rightarrow \infty} \Zhat_{n,m,K} = Z_n, ~\forall n,m$.

\begin{figure*}[t!]
    \centering
    \begin{subfigure}{0.49\textwidth}
    \includegraphics[trim={50pt 10pt 50pt 10pt},clip,width=0.95\linewidth, keepaspectratio]{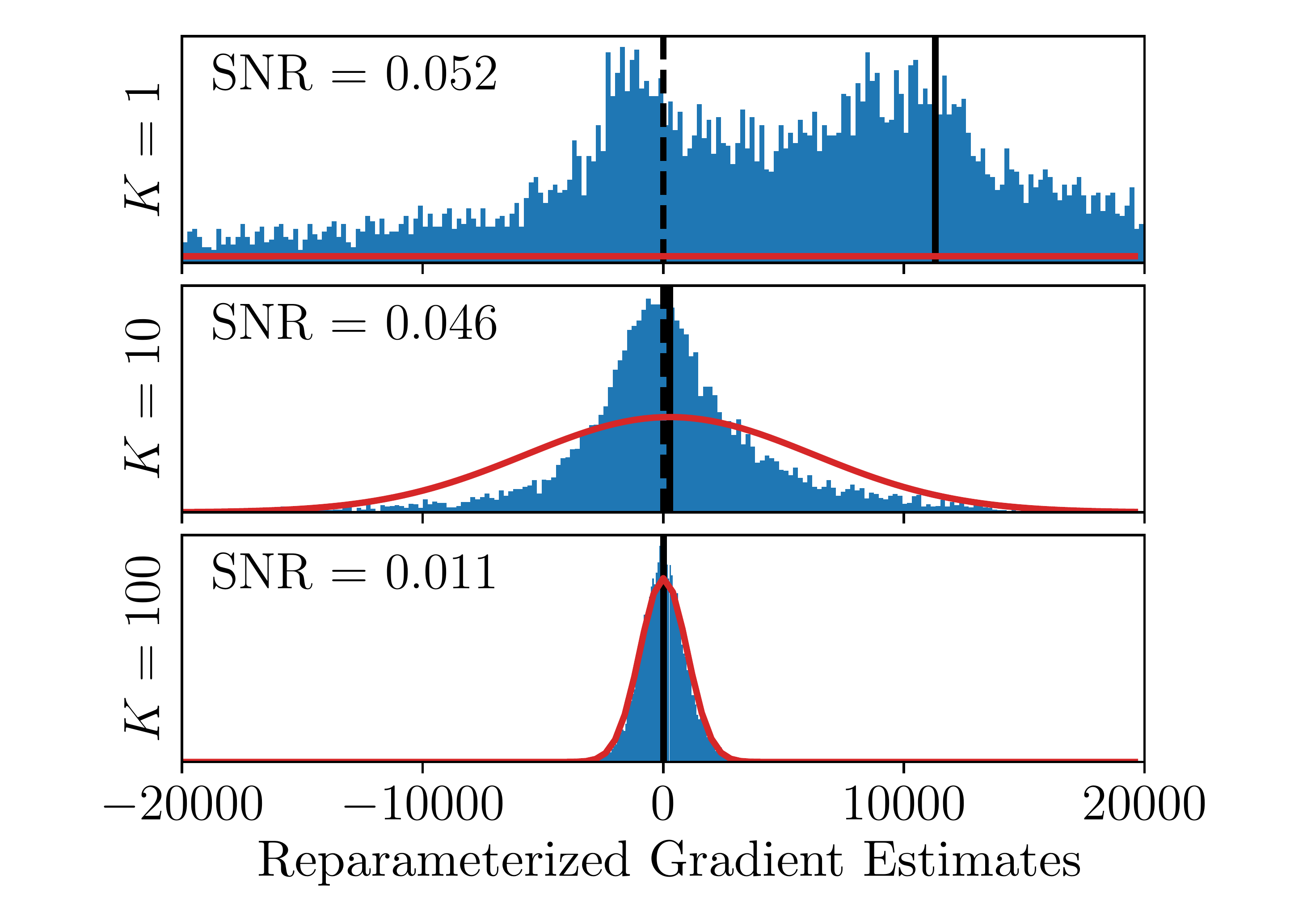}
    \caption{
    Histogram of \reg estimates.
    }
    \label{fig:gradient-histograms-dreg-off}
    \end{subfigure}\hfill
    \begin{subfigure}{0.49\textwidth}
    \centering
    \includegraphics[trim={50pt 10pt 50pt 10pt},clip,width=0.95\linewidth, keepaspectratio]{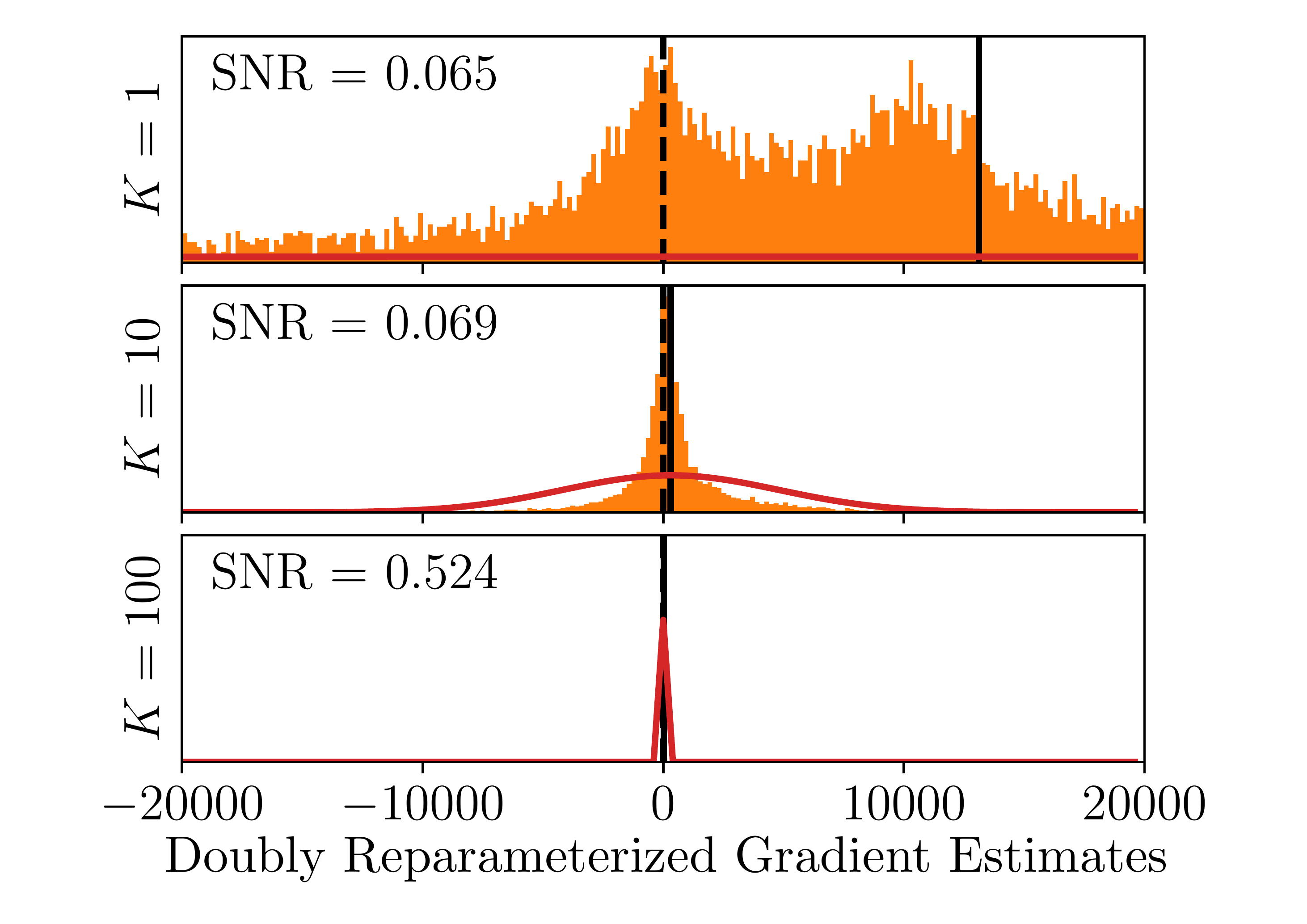}
    \caption{
    Histogram of \dreg estimates.
    }
    \label{fig:gradient-histograms-dreg-on}
    \end{subfigure}
    \caption{
    Histograms of \reg (a) and \dreg (b) estimates for varying numbers of importance samples, $K$, with \mbox{$K=1,10,100$}.
    The solid vertical lines denote the mean of the empirical distributions of gradient estimates, and the dashed vertical lines mark zero. The red curves denote Gaussian probability densities with the same mean and standard deviation as the empirical distributions.
    Note that we have truncated the histograms for $K=1$ in order to use a consistent $x$-axis for each row, see
   ~\Cref{app-fig:gradient-histograms-sep-axes}
    (in~\Cref{appsec:additional_results})
    of a copy of this figure with a separate $x$-axis for each value of $K$.
    }
    \label{fig:gradient-histograms}
    \vspace*{-8pt}
\end{figure*}

The key to establishing that the \snr pathology occurs in \iwvi for \dgps, is now to show that $Z_n$ is independent of $\phi$, the parameters of the variational distribution over the latent variable $z$.
For \iwaes this was trivially true by construction: The expected importance weight is just the marginal likelihood of the model.
In the \dgp case, however, the dependence of $Z_n$ on $\phi$ is not immediately obvious since the dependence of $\calF(x_{n}, y_{n}, f^{(1)}, z_{})$ on $z_{}$ is more complicated than in the \iwae case.
In particular, in \iwvi for \dgps, $z$ acts as an input to the approximate posterior predictive distribution of the \dgp layer $f^{(1)}$, which itself needs to be inferred and with respect to which the expectation over the importance weight in~\Cref{eq:iwvi_dgp_Z} is taken.

In short, we need to demonstrate that the following condition holds despite the dependence of $q(f^{(1)})$ on $z$:
\begin{customcondition}{1}\label{condition:gradZzero}
$\gradphi Z_n= 0~ \forall n$.
\end{customcondition}
\vspace{-5pt}
To examine if this condition is indeed satisfied in \iwvi for \dgps, the first hurdle that we need to overcome is that $f^{(1)}$ and $f^{(2)}$ represent stochastic functions, which means it is difficult to reason about their gradients or concretely establish the dependency relationships between variables.
To get around this, the key step is to realize that we can reparameterize this stochasticity and then view the bound from the perspective of the pushforward distribution induced by passing an input--latent pair through the resulting realizations of the functions.
Specifically, we have, for an arbitrary $z$,
\begin{align}
\begin{split}
    f^{(1)}&(x_n,z)
    =
    \ftilde(\epsilon, x_n, z, \psi^{(1)})
    \\
    \defines\,
    &\mu_{}^{f^{(1)}}(x_n, z, \psi^{(1)}) + \epsilon \odot \sqrt{\Sigma_{}^{f^{(1)}}(x_n, z, \psi^{(1)})},
\end{split}
\end{align}
where $\ftilde(\cdot)$ is a deterministic function;
$\epsilon \sim \calN(0, 1)$; $\mu_{}^{f^{(1)}}$, $\Sigma_{}^{f^{(1)}}$ denote the mean and covariance functions of the predictive distribution over $f^{(1)}$, respectively; and $\psi^{(1)}$ denotes the variational parameters associated with $f^{(1)}$.
We thus have
\begin{align*}
    Z_n &
    =\E_{q_{\phi}(z) p(\epsilon)} \left[ \frac{\calF(x_{n}, y_{n}, \ftilde(\epsilon, x_n, z, \psi^{(1)})) p(z)}{ q_{\phi}(z)} \right]
    \\
    &=
    \E_{p(z) p(\epsilon)} \left[ \calF(x_{n}, y_{n}, \ftilde(\epsilon, x_n, z, \psi^{(1)})) \right].
\end{align*}
From here it is now straightforward to see that \mbox{$\gradphi Z_n = 0$} as the final form of $Z_n$ above has no direct or indirect dependency on $\phi$.
We thus see that~\Cref{condition:gradZzero} is satisfied.

This now allows us to invoke the proof of~\citet[Theorem 1]{rainforth2018bounds} because a) our gradient estimator in~\Cref{eq:iwvi_dgp_reparameterization_gradient_estimator} is equivalent to that of \iwae except in the distribution of $w_{n,m,k}$, and b) the only VAE-context specific part of their proof is in showing that~\Cref{condition:gradZzero} holds.

Starting from equations (29) and (31) in their proof and using $\gradphi Z = 0$, we can now derive the mean and variance of the gradient estimator of a single data point (i.e.~for a particular $n$) as
\begin{align*}
    &\E [\Delta_{n, M, K}^{\text{{DGP}}}(\phi)] \!=\! -\frac{1}{2K Z_n^2} \gradphi \left(\operatorname{Var}[w_{n,1,1}]\right) \!+\! \mathcal{O}(K^{-2})\\
    &\operatorname{Var}[\Delta_{n, M, K}^{\text{{DGP}}}(\phi)] \!=\! \frac{1}{MKZ_n^2} \E [\left(\nabla_{\phi} w_{n,1,1}\right)^2]\!+\!\frac{\mathcal{O}(K^{-2})}{M}.
\end{align*}
Because $w_{n,1,1}$ has been reparameterized, we also have $\E [\left(\nabla_{\phi} w_{n,1,1}\right)^2]=\operatorname{Var}\left[\gradphi w_{n, 1,1}\right]$.
Substituting these into the definition of the \snr yields
\begin{align*}
\operatorname{SNR}_{n,M, K}^{\textrm{DGP}}(\phi) = 
\left|
\frac{-\frac{1}{2K Z_n^2} \gradphi \left(\operatorname{Var}[w_{n,1,1}]\right) \!+\! \mathcal{O}(K^{-2})}
{\sqrt{\frac{1}{MKZ_n^2} \operatorname{Var} [\nabla_{\phi} w_{n,1,1}]\!+\!\frac{\mathcal{O}(K^{-2})}{M}}}
\right|
\end{align*}
from which~\Cref{eq:full_bound} follows by straightforward manipulations.
\end{proof}
\begin{remark}
Note that the additional results of~\citet[Section 3.2]{rainforth2018bounds}, which show that the \snr improves as more data points are used in the gradient calculation for~\iwaes (namely as $\mathcal{O}(\sqrt{N})$), do not directly carry over to the \dgp setting because $f^{(1)}$ induces correlations between the different $\Delta_{n, M,K}^{\text{{DGP}}}(\phi)$.
\end{remark}

\begin{figure*}[t!]
    \vspace{9pt}
    \centering
    \includegraphics[width=\linewidth, keepaspectratio]{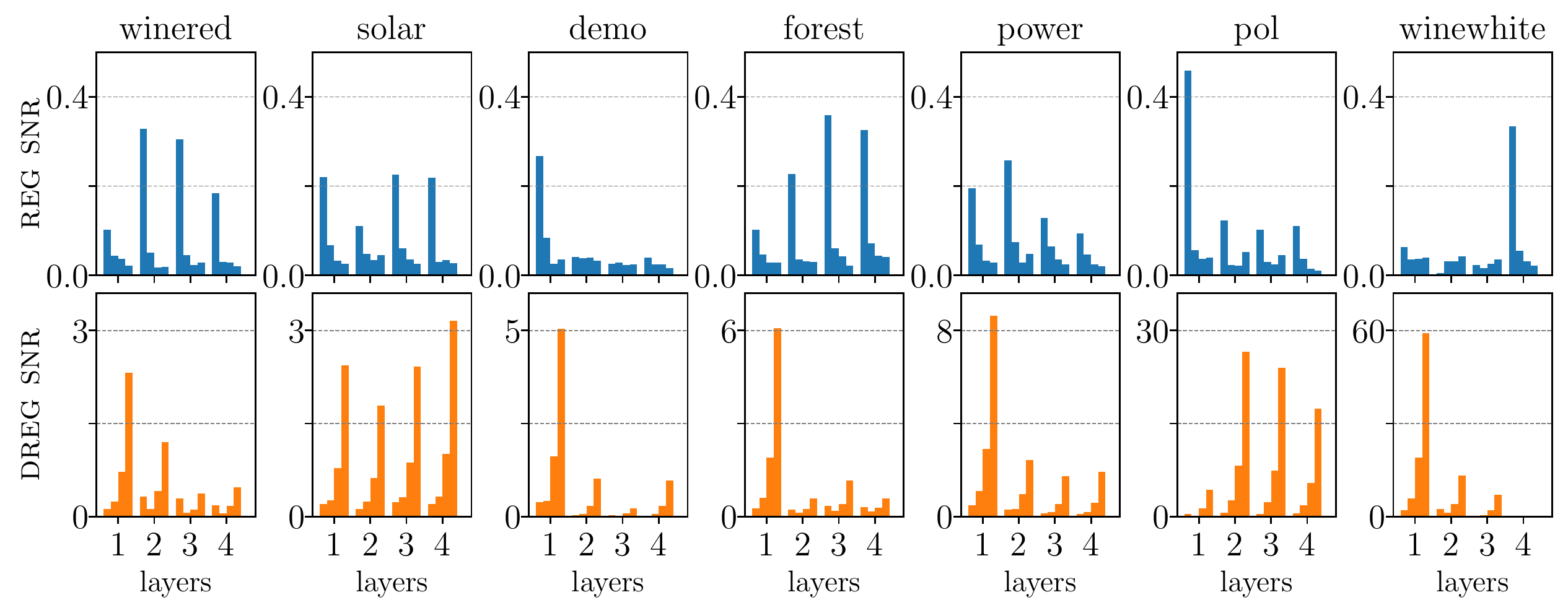}
    \caption{
        \snr of reparameterization (top row) and doubly reparameterized (bottom row) gradient estimates for shallow \gps and \dgps of 2-4 layers on a selection of real-world datasets.
        The labels on the $x$-axes correspond to the depths of the models.
        The bars for each depth show the \snr for increasing numbers of importance samples, $K=1,10,100,1000$, from left to right.
        In the top row, for \textsc{(d)gp}s of any depth, larger $K$ tends to correspond to lower \snrs.
        In the bottom row, for \textsc{(d)gp}s of any depth, larger $K$ tends to correspond to higher \snrs.
        Note the difference in $y$-axis scales across plots in the bottom row.
        See~\Cref{appsec:experimental_details} for further experiment details.
    }
    \label{fig:snr-grid}
\end{figure*}

\subsection{Empirical Confirmation}
\label{sec:empirical_confirmation_snr}

We now investigate the signal-to-noise ratio of the reparameterization gradient (\reg) estimates in \iwvi for \dgps empirically, and show that the empirical results are consistent with~\Cref{thm:snr_iwvi-dgp}.

We begin by examining the behavior of the gradient estimates for the parameters $\phi$ as we increase the number of importance samples, $K$, when training a two-layer latent-variable \dgp using \iwvi.
For illustrative purposes, we start by investigating the convergence behavior of the gradient estimates on an easy-to-visualize synthetic dataset specifically designed to exhibit non-Gaussianity in the target values.
We train the \dgp until the variational parameters are near convergence and then take $10,000$ gradient samples for each parameter in $\phi$ using the reparameterization gradient estimator.
For further details about the experiment setup and a visualization of the dataset, see~\Cref{appsec:experimental_details}.
In~\Cref{fig:gradient-histograms-dreg-off}, we present histograms of the empirical distribution of gradient estimates for a single (representative) variational parameter of $q_{\phi}$.

As we would expect by the Central Limit Theorem and knowledge that~\Cref{condition:gradZzero} is satisfied, the mean and the standard deviation of the distributions of gradient estimates approach zero as $K$ increases.
However, the means of the gradient estimates appear to be approaching zero more rapidly than their standard deviations, which would suggest a decrease in the \snr as $K$ increases.
To assess if such a deterioration in fact occurs, we compute the mean \snr across parameters $\phi$ for varying $K$ and multiple \dgp depths on a range of real-world datasets.
We collect the resulting mean \snrs in the top row of~\Cref{fig:snr-grid}, and find that---across datasets and \dgp depths---the \snr consistently approaches zero as $K$ increases, confirming that \iwvi for \dgps \emph{does} indeed suffer from \snr deterioration.

\section{Quantifying \& Fixing the SNR Pathology}
\label{sec:snr_analysis}

In the previous section, we demonstrated that \iwvi for \dgps suffers from \snr deterioration as the number of importance samples is increased.
This result leads to two questions:
(i) Can we construct an alternative estimator that avoids these issues; and (ii) will fixing the \snr issue lead to improved predictive performance?

To address the former question, we adapt the double-reparameterized \iwae estimator of~\citet{tucker2019dreg} to \dgp models and show that the resulting \dreg estimator completely remedies the \snr issue.
We then use this estimator to asses the impact \snr issues have had on the quality of the \dgp's posterior predictive distribution.
We consider a selection of datasets used in \citet{salimbeni2019iwvi} and find that our fix often leads to a statistically significant \emph{improvement in predictive performance} for \iwvi in \dgps.
Specifically, we find that on tasks where fitting the data well requires a predictive distribution with \emph{non-Gaussian marginals}, such that the latent variables are necessary, fixing the \snr deterioration improves the predictive performance.

\begin{figure*}[t!]
    \centering
    \includegraphics[width=\linewidth, keepaspectratio]{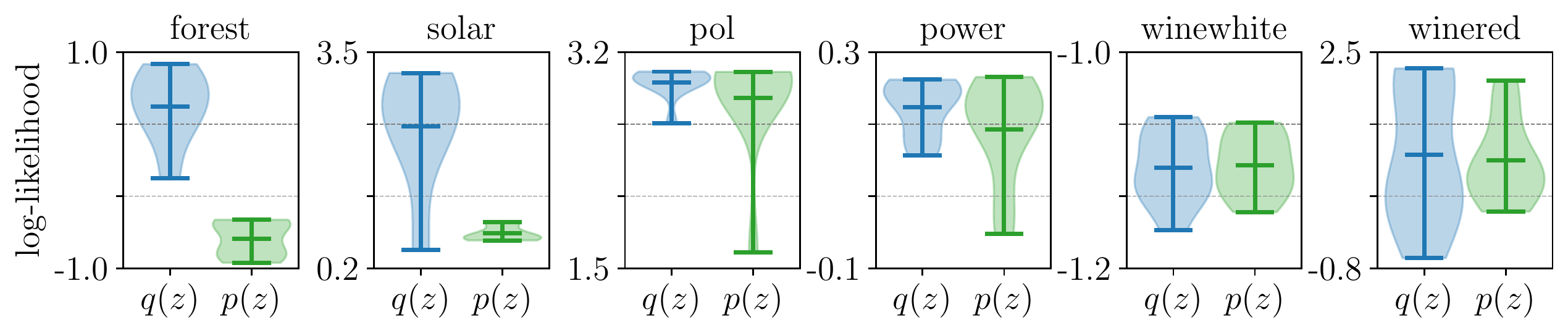}
    \caption{
        Comparison of predictive performance of $2$-layer \dgps with a learned variational distribution over the latent variable (left of each pair, \textcolor{reg}{blue}) and a variational distribution over the latent variable fixed to the prior (right of each pair, \textcolor{highlight}{green}).
        The shaded area shows the range of test log-likelihoods over 10 train--test splits, with the width indicating the distribution over the range.
        The central horizontal lines in each plot show the mean.
    }
    \label{fig:learned_vs_fixed_q}
    \vspace{-10pt}
\end{figure*}

\subsection{Avoiding SNR Deterioration in Deep GPs}
\label{sec:snr_avoidable}

To avoid \snr deterioration, we extend prior work to derive a doubly reparameterized gradient (\dreg,~\citep{tucker2019dreg}) estimator for \iwvi in \dgps.
This gradient estimator is equal to the \reg estimator (i.e.~\Cref{eq:iwvi_dgp_reparameterization_gradient_estimator}) in expectation, but does not suffer from asymptotic deterioration of the \snr as $K$ increases.
Assuming that reparameterization of $q(z_{n})$ in $\calL_{K}$ is possible, the \dreg estimator of $\calL_{K}$ at a single data point $x_{n}$ can be expressed as (see~\Cref{appsec:dreg_derivation} for the derivation)
\begin{align}\label{eq:iwvi_dgp_dreg}
\begin{split}
    &\widetilde{\Delta}_{n, M, K}^{{\text{DGP}}}(\phi)
    \\
    \defines &\frac{1}{M} \sum_{m=1}^M  \sum_{k=1}^K \left( \frac{{w}_{n,m, k}}{\sum_{j=1}^K {w}_{n,m, j}} \right)^2
    \frac{\partial \log {w}_{n,m, k}}{\partial z_{n,m,k}}
    \frac{\partial z_{n,m, k}}{\partial \phi},
\end{split}
\end{align}
where $w_{n,m,k}$ is as before.
As we explain in~\Cref{appsec:dreg_derivation}, analogously to the results of~\citet{tucker2019dreg}, the \snr of this gradient estimator scales as $\calO(\sqrt{K})$ instead of $\calO(1/\sqrt{K})$, that is, the \snr \emph{improves} as $K$ increases.
The \dreg estimator can be used as a drop-in replacement of the \reg estimator and is guaranteed to be at least as good or better without incurring additional computational cost \citep{tucker2019dreg}.\footnote{Our implementation of the \dreg estimator follows \url{https://sites.google.com/view/dregs}.}

To show that the \dreg estimates of the latent-variable variational parameters do not suffer from \snr deterioration in practice, we revisit the empirical investigations carried out in~\Cref{sec:empirical_confirmation_snr} and compute the \snr of the gradient estimates for an increasing number of importance samples across datasets and \dgp depths.
\Cref{fig:snr-grid} shows the resulting \snrs.
We find that the effect of increasing $K$ for \reg and \dreg is markedly different:
Unlike for the \reg estimator, the \dreg estimate \snr values \emph{increase} with $K$.

The difference between the \snrs of the two gradient estimators is explained by the speed at which the mean and standard deviation of the empirical distributions over the gradient estimates converge to zero as $K$ increases.
\Cref{fig:gradient-histograms-dreg-on} shows the empirical distribution of \dreg estimates.
As can be seen in the histograms, the means for $K=1,10,100$ decrease as $K$ increases, but the standard deviations of the empirical distributions of \dreg estimates are significantly smaller than those of the empirical distributions of the \reg estimates shown in~\Cref{fig:gradient-histograms-dreg-off}.
This difference is particularly striking for $K=100$, where the \dreg estimates are so peaked that they are difficult to visualize without changing the $x$-axis range.
The upshot of the results in Figures~\ref{fig:gradient-histograms} and~\ref{fig:snr-grid} is that the \dreg estimator completely remedies the \snr issue exhibited by the \reg estimates.

\subsection{Does the SNR Issue Affect Training and Predictive Performance?}
\label{sec:snr_effect_on_performance}

Armed with the \dreg estimator, we are now able to investigate the impact of correcting the \snr deterioration on the training and test performance of the model.
To do so, we consider a selection of datasets for which we either expect or do not expect the \snr issue to lead to a deterioration in predictive performance.
In particular, we assess the impact of \snr deterioration on two datasets where fitting the data well requires a predictive distribution with \emph{non-Gaussian marginals} (`forest' and `solar'), two datasets which can be fit well with just Gaussian marginals (`winewhite' and `winered'), and two datasets where a predictive distribution with non-Gaussian marginals could potentially help a modest amount (`pol' and `power').
By construction, latent-variable \dgps are able to learn highly non-Gaussian marginals and so learning a variational distribution over the latent variable should positively affect predictive performance whenever highly non-Gaussian marginals lead to a better fit of the data.
Hence, we would expect the effect of a deterioration in the \snr of the gradient estimates to be highest whenever this is the case.

\begin{figure}[t!]
    \centering
    \includegraphics[width=\linewidth, keepaspectratio]{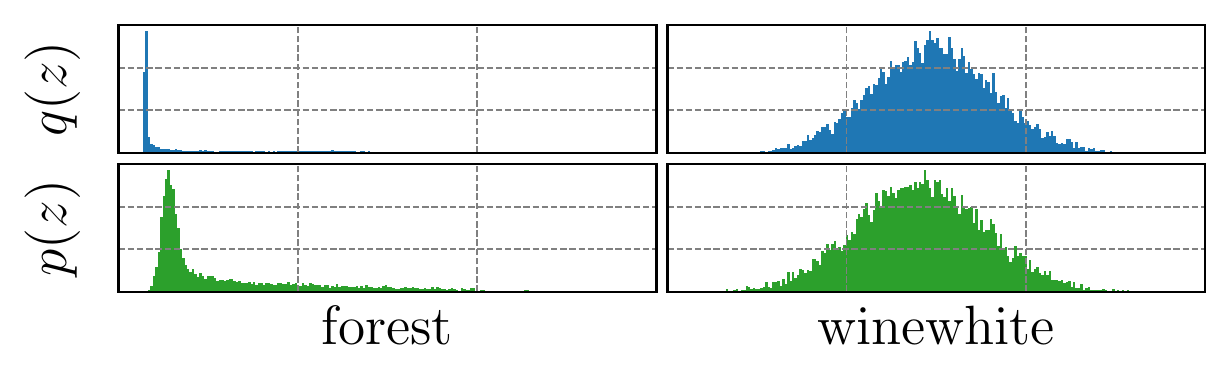}
    \vspace{-15pt}
    \caption{Marginal predictive distributions of $2$-layer \dgps with a learned variational distribution over the latent variable (top row, \textcolor{reg}{blue}) and a variational distribution over the latent variable fixed to the prior (bottom row, \textcolor{highlight}{green}) for randomly selected test points from the `forest' and `winewhite' datasets.
    We note that learning the variational distribution helps fitting a better non-Gaussian marginal distribution for `forest' (as evidenced by the test log-likelihoods shown in~\Cref{fig:learned_vs_fixed_q}), whereas for `winewhite' the marginal is Gaussian and fit equally well in both cases.
    }
    \label{fig:reg_vs_fixq_marginals}
    \vspace{-15pt}
\end{figure}

In~\Cref{fig:learned_vs_fixed_q}, we present the plots of test log-likelihoods for latent-variable \dgps with learned variational distributions over the latent variable (left of each pair, in \textcolor{reg}{blue}) and with latent variable distributions fixed to standard Gaussian priors (right of each pair, in \textcolor{highlight}{green}).
As can be seen from the plots, the importance of learning a variational distribution for the latents sometimes has a significant effect on predictive performance, but sometimes has no noticeable effect at all.
In~\Cref{fig:reg_vs_fixq_marginals}, we present sample marginal distributions of the best-performing models, which indicate that learning a distribution over the latents, $q_{\phi}(z)$, has a larger effect the more non-Gaussian the predictive distribution's marginals need to be.
We thus expect our \snr fix to be helpful for `forest' and `solar', but not for `winewhite' and `winered.'

\setlength{\tabcolsep}{9.3pt}
\begin{table*}[t!]
\small
    \centering
    \caption{
    Comparison of predictive performance of two-layer \dgps trained with \reg and \dreg estimators.
    We choose $K = 50$ to ensure the \snr deterioration occurs, which~\Cref{fig:snr-grid} shows is the case for $K \geq 10$.
    For each dataset, we provide the mean \elbos on the training dataset and log-likelihoods on the test dataset over $20$ random train--test splits as well as the corresponding standard errors.
    Boldface indicates higher means.
    The rightmost column shows $p$-values for one-sided Wilcoxon signed-rank hypothesis tests on the log-likelihoods, as described in~\Cref{sec:snr_effect_on_performance}.
    }
    \label{tab:reg_dreg_performance}
    \vspace{3pt}
    \begin{tabular}{r r c r c r c r c r}
        \midrule
        & \multicolumn{4}{c}{Train $\elbo$ ($K=50$)} & \multicolumn{5}{c}{Test log-likelihood} \\
        Dataset & \multicolumn{2}{c}{\reg} & \multicolumn{2}{c}{\dreg} & \multicolumn{2}{c}{\reg} & \multicolumn{2}{c}{\dreg} & Wilcoxon Test \\ \midrule
        & \multicolumn{1}{c}{Mean} & \multicolumn{1}{c}{SE} &
        \multicolumn{1}{c}{Mean} & \multicolumn{1}{c}{SE} &
        \multicolumn{1}{c}{Mean} & \multicolumn{1}{c}{SE} &
        \multicolumn{1}{c}{Mean} & \multicolumn{1}{c}{SE} &
        $p$-value \\
        \midrule
        forest & -97.56 & (11.04) & \textbf{-92.53} & (10.42) & 0.59 & (0.08) & \textbf{0.63} & (0.08) & 0.1\% \\
        solar & 1657.41 & (27.56) & \textbf{1707.75} & (42.20) & 2.33 & (0.17) & \textbf{2.57} & (0.11) & 2.8\% \\
        pol & 34610.49 & (66.18) & \textbf{34665.08} & (70.34) & 2.99 & (0.01) & 2.99 & (0.01) & 24.7\% \\
        power & 1510.50 & (10.62) & \textbf{1515.60} & (10.16) & 0.21 & (0.01) & 0.21 & (0.01) & 67.3\% \\
        winewhite & \textbf{-4701.26} & (4.92) & -4703.14 & (4.98) & -1.11 & (0.01) & -1.11 & (0.01) & 50.0\% \\
        winered & \textbf{447.91} & (249.81) & 314.75 & (216.32) & 0.57 & (0.27) & \textbf{0.61} & (0.20) & 41.1\% \\
        \midrule
        \multicolumn{7}{r}{} & \multicolumn{2}{l}{Across Datasets:} & 1.2\% \\
        \midrule
        \midrule
    \end{tabular}
    \vspace*{-10pt}
\end{table*}

To test this, we train latent-variable \dgps with both \reg and \dreg estimators on the six datasets shown in~\Cref{fig:learned_vs_fixed_q} and compare the resulting train $\elbo$s and test log-likelihoods.
For these experiments, we consider a two-layer \dgp with the hyperparameters that directly affect the \snr---the number of importance samples $K$ and the minibatch size---set to $50$ and $64$, respectively.
\Cref{tab:reg_dreg_performance} shows a summary of the results.
We see that our \dreg estimator provides improvements for the datasets where the latent variational approximation is clearly important (`forest' and `solar'), while performing similarly to the \reg estimator when it is not.

\begin{figure}[b!]
    \centering
    \vspace*{-20pt}
    \includegraphics[width=\linewidth, keepaspectratio]{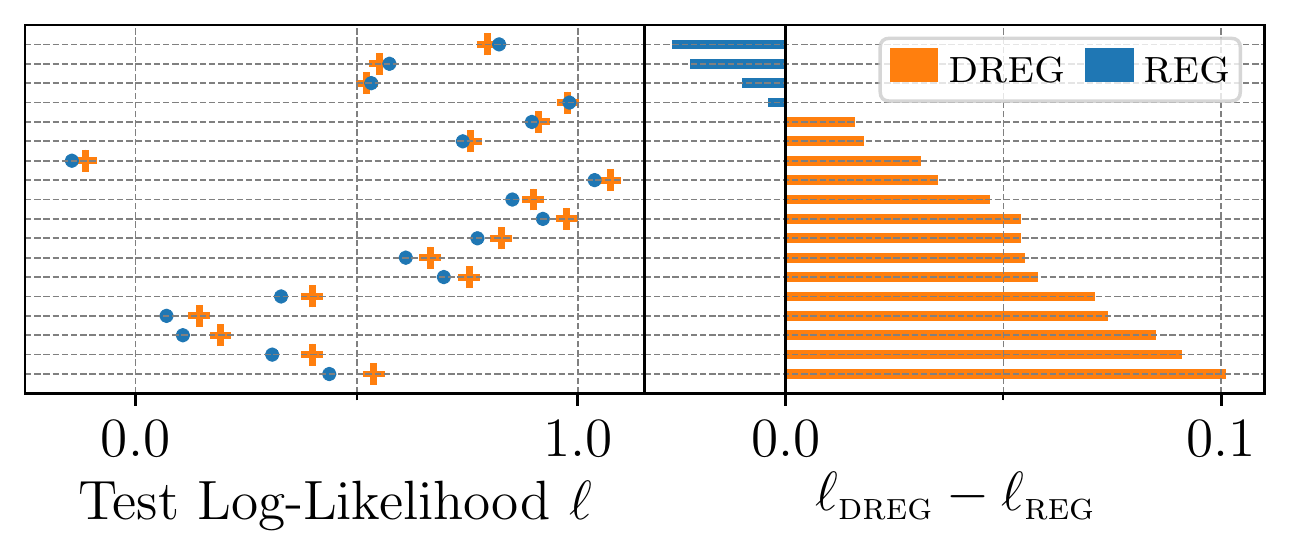}
    \vspace{-20pt}
    \caption{
        Left: Test log-likelihoods across seeds on the `forest' dataset, ordered from most negative to most positive.
        Right: Corresponding absolute difference in test log-likelihood between \reg and \dreg across seeds.
    }
    \label{fig:forest_tll_comparison}
    \vspace{-5pt}
\end{figure}

To assess whether the differences are statistically significant, we perform Wilcoxon signed-rank tests for each dataset individually as well as across datasets~\citep{wilcoxon1992individual}.
This is chosen in preference to a more conventional $t$-test because of the highly non-Gaussian nature of the variations across seeds, such that the criteria for a $t$-test to be representative are not met.
Specifically, for each dataset and across datasets, we test the null hypothesis that the difference in test log-likelihoods under \reg and \dreg for each random seed is zero with the alternative hypothesis that the test log-likelihood under \dreg is greater than the test log-likelihood under \reg.
We present the results from this one-sided hypothesis test in~\Cref{tab:reg_dreg_performance} (rightmost column).
As shown in the table, the $p$-values for the datasets on which learning a variational distribution over the latent variable yields a better fit of the data (see~\Cref{fig:learned_vs_fixed_q}) are $0.1$\% and $2.8$\%, leading us to reject the null hypothesis for both at the standard $5$\% confidence level.
While the statistical significance of the improvement in log-likelihood under \dreg is obfuscated by large standard errors across train-test splits,~\Cref{fig:forest_tll_comparison} shows that \dreg leads to a \emph{consistent improvement} in predictive performance across splits when the variational distribution is important.
We further observe that, as expected, when the variational distribution is not important for obtaining a good fit, there is no statistically significant change.

Finally, we find that the improvement in predictive performance under \dreg \emph{across datasets} for even a moderately small number of importance samples ($K=50$) is statistically significant at the $5$\% confidence level (with a $p$-value of $1.2$\%), which is expected since, as~\Cref{tab:reg_dreg_performance} shows, fixing the \snr issue with the \dreg estimator either improves predictive performance or does not affect it at all.

We finish by noting that that~\citet{salimbeni2019iwvi} already provided comparisons to other inference approaches for \dgps \citep{salimbeni2017dsvi, havasi2018hmc}---along with other regressors---and found that using \iwvi for \dgps with the \reg estimator produced state-of-the-art predictive performance.
It thus follows that our approach is able to further improve upon the state of the art for regression modeling and in especially well-suited for data that requires complex, highly-Gaussian predictive marginals.

\section{Conclusions}
\label{sec:conclusions}

We have shown that the gradient estimates used in training \dgps with \iwvi are susceptible to \textit{signal-to-noise ratio} issues:
We demonstrated theoretically that the \snr for the latent-variable variational parameters increases as more importance samples are used, and confirmed this result empirically.
We have further shown how this pathology can be remedied by adapting the doubly reparameterized approach of~\citet{tucker2019dreg} to the \dgp setting, resulting in a gradient estimator whose \snr \textit{increases} with the number of importance samples for all variables.
This estimator can be used as a drop in replacement without incurring any significant increase in computational cost, implementation challenges, or other negative effects.
We find that it can provide improvements in predictive performance, even for small numbers of importance samples, when variational approximation of the latents is important, without damaging performance when it is not.

\section*{Acknowledgements}
Tim G. J. Rudner is funded by the Rhodes Trust and the Engineering and Physical Sciences Research Council (EPSRC).
Oscar Key is funded by the Engineering and Physical Sciences Research Council (EPSRC), grant number EP/S021566/1.
We gratefully acknowledge donations of computing resources by the Alan Turing Institute.

\bibliography{references}
\bibliographystyle{include/icml2021}

\clearpage

\onecolumn

\begin{appendices}

\crefalias{section}{appsec}
\crefalias{subsection}{appsec}
\crefalias{subsubsection}{appsec}

\setcounter{equation}{0}
\renewcommand{\theequation}{\thesection.\arabic{equation}}

\section*{\LARGE Supplementary Material}
\label{sec:app}

\section{Derivation of the Partially-Collapsed Variational Bound}
\label{appsec:partially-collapsed_bound_iwvi-dgp}

In this section, we present the derivation of the partially-collapsed lower bound presented in \citet{salimbeni2019iwvi}.
As in~\Cref{sec:dgp_model} in the main text, we consider the augmented joint distribution
\begin{align}
    \label{app-eq:dgp_joint}
    p(y , f^{(1)}, f^{(2)}, z) = \prod_n^N p(y_n | f^{(1)}, f^{(2)}, z; x_n) \prod_{\ell = 1}^2 p\left(f^{(\ell)} ; f^{(\ell-1)} \right),
\end{align}
where $\{(x_n, y_n)\}_{n=1}^N$ are input output pairs.
To perform variational inference in such a \dgp model with latent variables, we can maximize an evidence lower bound (\elbo) on the log marginal likelihood
\begin{align}
\label{eq-app:marginal_likelihood}
    p(y) = \int p(y | f^{(2)}, f^{(1)}, z ) \prod_{\ell=1}^2 p\left(f^{(\ell)}\right) p\left(z \right) d z d f^{(1)} d f^{(2)},
\end{align}
by repeatedly applying Jensen's inequality, which yields the variational bound
\begin{align}
    \label{app-eq:iwvi_elbo_standard}
    \log p(y) \geq \sum_n \underset{f^{(1)}, f^{(2)}, z_{n}}{\mathbb{E}} \left[ \log p\Big(y \Big|f^{(2)}, f^{(1)}, z_{n}\Big) \frac{\prod_{\ell}^{2} p(f^{(\ell)}) p\left(z_{n}\right)}{\prod_{\ell}^{2} q(f^{(\ell)}) q\left(z_{n}\right)} \right],
\end{align}
where the expectation is taken over the variational distributions $q(f^{(2)})$ $q(f^{(1)})$, and $q(z_{n})$.
However gradient estimates of this objective may have high variance, since evaluating the log expected likelihood requires Monte Carlo sampling over the final \dgp layer (which cannot be evaluated analytically).

To obtain a lower-variance gradient estimator, \citet{salimbeni2019iwvi} derive an alternative, partially-collapsed lower bound.
To do so, they apply Jensen's inequality to the expectation over $f^{(2)}$ in~\Cref{eq-app:marginal_likelihood} to get
\begin{align}
\label{app-eq:first_jensen}
    \log p(y | f^{(1)}, z) \geq \sum_{n} L_{n}(f^{(1)}, z_{n}) - \DKL{q(f^{(2)})}{p(f^{(2)})},
\end{align}
where
\begin{align}
\SwapAboveDisplaySkip
    L_{n}(f^{(1)}, z_{n}) \defines \E_{f^{(2)}} [ \log p(y_{n} | f^{(2)}, f^{(1)}, z_{n} ) ],
\end{align}
which can be obtained in closed form for a Gaussian likelihood.
Applying the exponential function to both sides of~\Cref{app-eq:first_jensen} then yields
\begin{align}
\SwapAboveDisplaySkip
\label{app-eq:substitute}
    p(y | f^{(1)}, z) \geq \exp \left( \sum_{n} L_{n}(f^{(1)}, z_{n}) - \mathrm{KL}(q(f^{(2)}) || p(f^{(2)})) \right).
\end{align}
Expressing the joint distribution with $f^{(2)}$ integrated out, we can write
\begin{align}
    p(y) = \underset{f^{(1)}, z}{\mathbb{E}} \left[ p(y | f^{(1)}, z ) \frac{p\left(f^{(1)}\right) p\left(z\right)}{q\left(f^{(1)}\right) q\left(z\right)}  \right],
\end{align}
and substituting in~\Cref{app-eq:substitute} and get the lower bound
\begin{align}
    p(y) \geq  \E_{f^{(1)}, z} \exp \left( \sum_{n} L_{n}(f^{(1)}, z_n) - \mathrm{KL}(q(f^{(2)}) || p(f^{(2)})) \right) \frac{p(z) p(f^{(1)})}{q(z) q(f^{(1)})}.
\end{align}
Applying Jensen's inequality for the expectation over $f^{(1)}$ and taking the average over $K$ samples from $z$ to reduce the variance of the resulting bound, \citet{salimbeni2019iwvi} obtain the partially-collapsed lower bound
\begin{align}
\label{app-eq:iwvi_elbo_collapsed}
\begin{split}
    \log p(y)
    &\geq \sum_{n} \underset{f^{(1)}_{1:K}, z_{n, 1:K}}{\mathbb{E}} \left[ \log \frac{1}{K} \sum_{k=1}^K \frac{e^{\mathbb{E}_{f^{(2)}}\left[ \log p(y_n | f^{(2)}, f^{(1)}_{k}, z_{n, k} ) \right]} p\left(z_{n, k} \right)}{ q\left(z_{n, k} \right)} \right] - \sum_\ell^{2} \DKL{q(f^{(\ell)})}{p(f^{(\ell)})}
    \\
    &= \sum_{n} \underset{f^{(1)}_{1:K}, z_{n, 1:K}}{\mathbb{E}} \left[ \log \frac{1}{K} \sum_{k=1}^K \frac{\calF(x_{n}, y_{n}, f^{(1)}_{k}, z_{n, k}) p\left(z_{n, k} \right)}{ q\left(z_{n, k} \right)} \right] - \sum_\ell^{2} \DKL{q(f^{(\ell)})}{p(f^{(\ell)})}
    \\
    &\defines \mathcal{L}_{\text{DGP-IWVI}},
\end{split}
\end{align}
where $\calF(x_{n}, y_{n}, f^{(1)}_{k}, z_{n, k}) \defines \exp\left( \E _{q(f^{(2)})}\left[ \log p(y_{n} \,|\, f^{(2)}, f^{(1)}_{k}, z_{n, k} ) \right]\right)$.
This is the bound presented in~\Cref{sec:iwvi-dgp}.
\vspace*{-20pt}

\section{Importance-Weighted Variational Inference in VAEs}
\label{appsec:iwvae}

Training a variational auto-encoder (\vae,~\citet{KingmaVAE}) with the standard \elbo can lead to generative models with limited generative ability due to the bound being very loose.
One was to improve this is to use a different, tighter, \elbo.
Of particular relevance to our work,~\citet{burda2015iwae,mnih2016variational} propose \textit{importance-weighted} variational inference (\iwvi) to obtain tight variational bounds.
Namely, they use the variational objective
\begin{align}\label{app-eq:iwvi_vae_bound}
    \calL_{\text{IWAE}} \defines 
    \underset{z_{n, 1:K}}{ \E }
    \left[\log \frac{1}{K} \sum_{k=1}^K \frac{p_\theta(x_{n}, z_{n, k})}{q_\phi(z_{n, k} \,|\, x_{n})} \right],
\end{align}
where $z_{n, k} \sim q_\phi(z_{n} \,|\, x_{n})$ and $p_\theta(x_{n}, z_{n, k})$ represents the generative model.
It is worth noting that this bound is as tight as or tighter than the standard \vae \elbo for all $K$, i.e. $\calL_{\text{IWAE}} \geq \calL_{\text{VI-VAE}}$. 
This follows from the fact that the Monte Carlo estimator of the expected importance weight,
\begin{align}\label{app-eq:importance_sampling_estimator}
    \Zhat_{K}^{\text{IWAE}} \defines \frac{1}{K} \sum_{k=1}^K w_{n, k}, \,\, \text{with} \,\, w_{n, k} = \frac{p_\theta(x_{n}, z_{n, k})}{q_\phi(z_{n, k} \,|\, x_{n})},
\end{align}
has decreasing variance for increasing $K$, with a maximum variance at $K=1$.
Moreover, note that for $z_{n, k} \sim q_\phi(z_{n} \,|\, x_{n})$ and $K \rightarrow \infty$, $\Zhat_{K}^{\text{IWAE}}$
converges to the true log marginal likelihood $p_\theta(x_{n})$ as the number of importance samples increases and the bound becomes tight everywhere.

\section{Reparameterization Gradient Estimator of Importance-Weighted VI}
\label{appsec:reparameterization_iwvi-dgp}

We consider the gradient estimator of $\calL_{\textsc{K}}$ with respect to $\phi$, the set of parameters governing the variational distribution over the latent variable $z$, which is given by
\begin{align}\label{app-eq:iwvi_gradient}
\begin{split}
    \gradphi \calL_{\textsc{K}} = \gradphi \sum_{n} \underset{f^{(1)}_{1:K}, z_{n, 1:K}}{ \E } \left[ \log \frac{1}{K} \sum_{k=1}^K \frac{\calF\left(x_{n}, y_{n}, f^{(1)}_{k}, z_{n, k} \right) p(z_{n, k})}{ q(z_{n, k})} \right],
\end{split}
\end{align}
where $z_{n, k}$ are samples from $q(z)$ and $f^{(1)}_{k}$ are samples from the variational distribution $q(f^{(1)})$.
If $z$ and $f^{(1)}$ are reparameterizable---that is samples from $q(z)$ and $q(f^{(1)})$ can be expressed as
\begin{align}
    z_{n, k} = \ztilde(\epsilon^{z}_{k}, \phi) = \mu^{z}(\phi) + \epsilon_k^{z} \odot \sqrt{\Sigma^{z}(\phi)}
\end{align}
and
\begin{align}
    f^{(1)}(x_{n}, z_{n, k}) = \ftilde(\epsilon^{f^{(1)}}_{k}, x_{n}, \ztilde(\epsilon^{z}_{k}, \phi), \psi^{(1)}) = \mu^{f^{(1)}}(x_{n}, \ztilde(\epsilon^{z}_{k}, \phi), \psi^{(1)}) + \epsilon^{f^{(1)}}_{k} \odot \sqrt{\Sigma^{f^{(1)}}(x_{n}, \ztilde(\epsilon^{z}_{k}, \phi), \psi^{(1)})},
\end{align}
respectively, where $\ztilde(\cdot)$ and $\ftilde(\cdot)$ are deterministic functions, $\mu^{f^{(1)}}$ and $\Sigma^{f^{(1)}}$ denote the mean and variance of the predictive distribution over $f^{(1)}$, $\phi$ and $\psi^{(1)}$ are variational parameters of $z$ and $f^{(1)}$ respectively, and $\epsilon^{z}_{k}, \epsilon^{f^{(1)}}_{k} \sim \calN(0, 1)$ are both independent of $\phi$ and $\psi^{(1)}$---then the gradient estimator in~\Cref{app-eq:iwvi_gradient} can be expressed as
\begin{gather}\label{app-eq:iwvi_gradient_reparameterized}
\begin{aligned}
    \gradphi \calL_{\textsc{K}}
    &=
    \gradphi \sum_{n} \underset{\epsilon_{1:K}^{z}, \epsilon^{f^{(1)}}_{1:K}}{ \E } \left[ \log \frac{1}{K} \sum_{k=1}^K \frac{\calF\left(x_{n}, y_{n}, \ftilde(\epsilon^{f^{(1)}}_{k}, \ztilde(\epsilon^{z}_{k}, \phi), \psi^{(1)}) \right) p(\ztilde(\epsilon^{z}_{k}, \phi))}{ q(\ztilde(\epsilon^{z}_{k}, \phi))} \right]
    \\
    &=
    \gradphi \sum_n \iint \prod_{k'=1}^K p(\epsilon_{k'}^{z}) p(\epsilon^{f^{(1)}}_{k'}) \log \frac{1}{K} \sum_{k=1}^K \frac{\calF\left(x_{n}, y_{n}, f^{(1)}_{k}, z_{n, k} \right) p(z_{n, k})}{ q(z_{n, k})} d \epsilon_{1:K}^{z} d \epsilon^{f^{(1)}}_{1:K} \\
    &= \sum_n \iint \prod_{k'=1}^K p(\epsilon_{k'}^{z}) p(\epsilon^{f^{(1)}}_{k'}) \gradphi \log \frac{1}{K} \sum_{k=1}^K \frac{\calF\left(x_{n}, y_{n}, f^{(1)}_{k}, z_{n, k} \right) p(z_{n, k})}{ q(z_{n, k})} d \epsilon_{1:K}^{z} d \epsilon^{f^{(1)}}_{1:K}
    \\
    &=
    \sum_{n} \underset{\epsilon_{1:K}^{z}, \epsilon^{f^{(1)}}_{1:K}}{ \E } \left[ \gradphi  \log \frac{1}{K} \sum_{k=1}^K \frac{\bar{\calF}\left(x_{n}, y_{n}, \ftilde(\epsilon^{f^{(1)}}_{k}, \ztilde(\epsilon^{z}_{k}, \phi), \psi^{(1)}) \right) p(\ztilde(\epsilon^{z}_{k}, \phi))}{ q(\ztilde(\epsilon^{z}_{k}, \phi))} \right]
    \end{aligned}
    \\
	\text{with} \quad \epsilon^{z}_{k} \sim \calN(0, 1) \quad \text{and} \quad \epsilon^{f^{(1)}}_{k} \sim \calN(0, 1), \nonumber
\end{gather}
where we were able to move the gradient inside the integral, since the variational distributions $q_{\phi}(z)$ and $q_{\psi^{(1)}}(f^{\left(1 \right)})$ are reparameterized as described above with the random variables $\epsilon^{z}_{k}$ and $\epsilon^{f^{(1)}}_{k}$ being independent of $\phi$.
Defining the importance weights as
\begin{align}
	w_{n, k}(f^{(1)}, z)
	\defines
	\frac{\calF\left(x_{n}, y_{n}, f^{(1)}_{k}, z_{n, k} \right) p(z_{n, k})}{ q(z_{n, k})},
\end{align}
and having established the dependence of $\calF(\cdot)$ on $x_{n}$, $y_{n}$, $f^{(1)}_{k}$, and $z_{n, k}$, we can now follow~\citet{burda2015iwae} and evaluate the derivative of the logarithm under reparameterization with respect to $\phi$, which yields
\begin{align}\label{app-eq:iwvi_gradient_final}
\begin{split}
	\gradphi \calL_{\textsc{K}} &= \sum_n \underset{\epsilon_{1:K}^{z}, \epsilon^{f^{(1)}}_{1:K}}{ \E } \bigg[ \sum_{k=1}^K \frac{w_{n, k}(\ftilde(\epsilon^{f^{(1)}}_{k}, \ztilde(\epsilon^{z}_{k}, \phi), \psi^{(1)}), \ztilde(\epsilon^{z}_{k}, \phi))}{\sum_{j=1}^K w_{n, j}(\ftilde(\epsilon^{f^{(1)}}_{k}, \ztilde(\epsilon^{z}_{k}, \phi), \psi^{(1)}), \ztilde(\epsilon^{z}_{k}, \phi))}  \gradphi \log w_{n, k}(\ftilde(\epsilon^{f^{(1)}}_{k}, \ztilde(\epsilon^{z}_{k}, \phi), \psi^{(1)}), \ztilde(\epsilon^{z}_{k}, \phi)) \bigg].
\end{split}
\end{align}
\Cref{app-eq:iwvi_gradient_final} is the importance-weighted variational inference gradient estimator for a 2-layer latent \dgp model, which we note has the same functional form as the \iwvae gradient estimator---with the exception of the additional expectation over the non-final layer random function(s) in $\calF\left(x_{n}, y_{n}, f^{(1)}_{k}, z_{n, k} \right)$.

The Monte Carlo gradient estimator is then given by
\begin{align}\label{app:eq:iwvi_gradient_mc}
\begin{split}
    \gradphi \calL_{\textsc{K}} &= \frac{1}{M} \sum_n \sum_{m=1}^M  \sum_{k=1}^K \frac{w_{n, m, k}(\ftilde(\epsilon^{f^{(1)}}_{m, k}, \ztilde(\epsilon^{z}_{m, k}, \phi), \psi^{(1)}), \ztilde(\epsilon^{z}_{m, k}, \phi))}{\sum_{j=1}^K w_{n, m, j}(\ftilde(\epsilon^{f^{(1)}}_{m, k}, \ztilde(\epsilon^{z}_{m, k}, \phi), \psi^{(1)}), \ztilde(\epsilon^{z}_{m, k}, \phi))} \\
	&\qquad \qquad \qquad \qquad \qquad \cdot \gradphi \log w_{n, m, k}(\ftilde(\epsilon^{f^{(1)}}_{m, k}, \ztilde(\epsilon^{z}_{m, k}, \phi), \psi^{(1)}), \ztilde(\epsilon^{z}_{m, k}, \phi)),
\end{split}
\end{align}
where $M$ is the number of Monte Carlo samples used for estimation of $\calL_{\textsc{K}}$.

\section{Deep GP Doubly Reparameterized Gradient Estimator}
\label{appsec:dreg_derivation}

Assuming that reparameterization of $q(z_{n})$ in $\calL_{\textsc{IWAE}}$ is possible, the doubly reparameterized gradient (\dreg) estimator of $\calL_{\textsc{IWAE}}$ at a single data point $x_{n}$ can be expressed as
\begin{align}
\begin{split}
    \widetilde{\Delta}_{n, K}^{\text{IWAE}}(\theta, \phi) \defines \nabla_{\phi} \mathbb{E}_{z_{1: K}}\left[\log \left(\frac{1}{K} \sum_{k=1}^{K} \bar{w}_{n, k}\right)\right]
    &=\nabla_{\phi} \mathbb{E}_{\epsilon_{1: K}}\left[\log \left(\frac{1}{K} \sum_{k=1}^{K} \bar{w}_{n, k}\right)\right] =\mathbb{E}_{\epsilon_{1: K}}\left[\sum_{k=1}^{K} \frac{\bar{w}_{n, k}}{\sum_{j} \bar{w}_{n, j}} \nabla_{\phi} \log \bar{w}_{n, k}\right],
\end{split}
\end{align}
with $\bar{w}_{n, k}=p_{\theta}\left(x_{n}, z_{n, k}\right) / q_{\phi}\left(z_{n, k} | x_{n} \right)$.
Following~\citet{tucker2019dreg} we can rearrange the $\nabla_{\phi} \log \bar{w}_{n, k}$ term and take a Monte Carlo estimate of $\epsilon_{1:K}$ to produce the gradient estimate
\begin{align}\label{app-eq:dreg}
\begin{split}
     &\widetilde{\Delta}_{n, M, K}^{\text{IWAE}}(\theta, \phi) \defines \frac{1}{M} \sum_{m=1}^M \sum_{k=1}^K \frac{\bar{w}_{n, m, k}}{\sum_{k'=1}^K \bar{w}_{n, m, k'}} \left(\frac{\partial \log \bar{w}_{n, m, k}}{\partial z_{n, m, k}} \frac{\partial z_{m, k}}{\partial \phi} - \frac{\partial}{\partial \phi} \log q_\phi(z_{n, m, k} \,|\, x_{n}) \right),
\end{split}
\end{align}
where \mbox{$z_{n, m, k} \stackrel{\text{i.i.d.}}{\sim} q_{\phi}(z_{n} \,|\, x_{n})$} as before.
As stated in~\citet{tucker2019dreg}, the \snr of this gradient estimator scales as $\calO({\sqrt{K}})$ instead of $\calO{(1/\sqrt{K}})$, that is, the \snr improves as $K$ increases.

To derive a \dreg estimator for \dgps, we note that in~\Cref{appsec:reparameterization_iwvi-dgp}, we show that the reparameterization gradient for the gradient with respect to the parameters of the variational distribution over the latent variable $z$ in \iwvi in \dgps (see~\Cref{eq:iwvi_dgp_reparameterization_gradient_estimator}) at a single data point $x_{n}$ can be expressed as 
\begin{gather}\label{app-eq:iwvi_gradient_mc_full}
    \Delta_{n, M, K}^{{\text{DGP}}}(\phi) \defines \frac{1}{M} \sum_{m=1}^M  \sum_{k=1}^K \frac{w_{n, m, k}}{\sum_{j=1}^K w_{n, m, j}} \gradphi \log w_{n, m, k},
    \\
    \text{where} \quad w_{n, m, k} \defines \tilde{\calF}_{n, m, k} \, \frac{p(\ztilde(\epsilon^{z}_{m, k}, \phi))}{ q(\ztilde(\epsilon^{z}_{m, k}, \phi))} \quad \text{and} \quad \tilde{\calF}_{n, m, k} \defines \bar{\calF}(x_{n}, y_{n}, \ftilde(\epsilon^{f^{(1)}}_{m, k}, \ztilde(\epsilon^{z}_{m, k}, \phi), \psi^{(1)})). \nonumber
\end{gather}
Evaluating the $\gradphi \log w_{n, m, k}$ term as above, we get
\begin{align}\label{app-eq:iwvi_gradient_mc_full2}
\begin{split}
    \Delta_{n, M, K}^{{\text{DGP}}}(\phi) &\defines \frac{1}{M} \sum_{m=1}^M  \sum_{k=1}^K \frac{w_{n, m, k}}{\sum_{j=1}^K w_{n, m, j}} \bigg( \frac{\partial \log w_{n, m, k}}{\partial \ztilde} \frac{\partial \ztilde(\epsilon^{z}_{m, k})}{\partial \phi} - \frac{\partial}{\partial \phi} \log q(\ztilde(\epsilon^{z}_{m, k}, \phi)) \bigg),
\end{split}
\end{align}
and note that this estimator strongly resembles the \iwae-\dreg estimator with the difference that here, $\frac{\partial \log w_{n, m, k}}{\partial \ztilde}$ contains the reparameterized \dgp layers.
We exploit this similarity to derive a \dreg estimator for \dgps by following the derivation presented in \citet{tucker2019dreg}.
To do so, we take advantage of the fact that once the reparameterization gradient estimator is established, the derivation in \citet{tucker2019dreg} only relies on the equivalence between the REINFORCE~\citep{williams1992reinforce} and reparameterization gradient estimators and does not rely on the exact expression for $\frac{\partial \log w_{n, m, k}}{\partial \ztilde}$.
This way, we are able to replace the high-variance REINFORCE gradient term, $\frac{\partial}{\partial \phi} \log q(\ztilde(\epsilon^{z}_{m, k}, \phi))$ in~\Cref{app-eq:iwvi_gradient_mc_full} and, obtain a \dreg estimator for \iwvi for \dgps given by
\begin{align}
\label{app-eq:iwvi_dgp_dreg}
\begin{split}
    \widetilde{\Delta}_{n, M, K}^{{\text{DGP}}}(\phi)
    &\defines
    \frac{1}{M} \sum_{m=1}^M  \sum_{k=1}^K \left( \frac{w_{n, m, k}}{\sum_{j=1}^K w_{n, m, j}} \right)^2 \frac{\partial \log w_{n, m, k}}{\partial \ztilde} \frac{\partial \ztilde(\epsilon^{z}_{m, k})}{\partial \phi},
\end{split}
\end{align}
where, unlike for \iwaes,
\begin{align}
    w_{n, m, k} \defines \tilde{\calF}_{n, m, k} \, \frac{p(\ztilde(\epsilon^{z}_{m, k}, \phi))}{ q(\ztilde(\epsilon^{z}_{m, k}, \phi))} \quad \text{with} \quad \bar{\calF}_{n, m, k} \defines \bar{\calF}(x_{n}, y_{n}, \ftilde(\epsilon_{m, k}^{f^{(1)}}, \ztilde(\epsilon^{z}_{m, k}, \phi), \psi^{(1)})).
\end{align}

\section{Experimental Details}
\label{appsec:experimental_details}

\subsection{Datasets}
We design a 1D ``demo'' dataset which exhibits multimodality and is easy to visualize.
It is defined
\begin{align}
    f_1(x) &= \frac{\sin(4x)}{3} + 0.2 e^\epsilon \\
    f_2(x) &= \frac{9 x^2}{30} + 1.5 + 0.2 e^\epsilon \\
    \epsilon &\sim \mathcal{N}(0,1) \\ 
    f(x) &= \begin{cases}
        f_1(x), & \text{with probability } 0.6 \\
        f_2(x),  & \text{otherwise} .
    \end{cases}
\end{align}
\Cref{fig:demo_dataset} shows the demo dataset.
It contains $2000$ points.

\begin{figure}[ht!]
    \centering
    \includegraphics[width=0.65\linewidth, keepaspectratio]{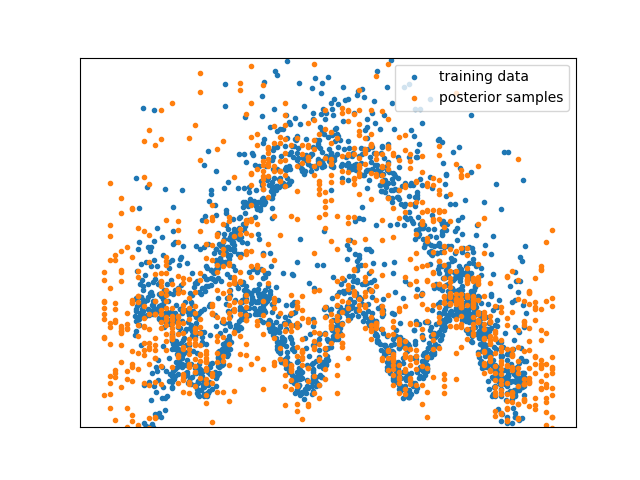}
    \vspace{-15pt}
    \caption{
        1D demo dataset with multimodality. The blue points show the training data, while the orange points are samples from the posterior distribution of a two-layer \dgp fit to the data.
    }
    \label{fig:demo_dataset}
\end{figure}

In~\Cref{fig:snr_reg_example} and~\Cref{fig:gradient-histograms} we consider the demo dataset.

In~\Cref{fig:snr-grid} and~\Cref{fig:learned_vs_fixed_q} we consider a set of real world datasets.
These are the UCI datasets \citep{ucidatasets}.
We reserve 10\% of the dataset as test data, and use the remaining data as training data.
We normalize the data so that the training set has mean zero and standard deviation one. We use the following library to load and preprocess the data, and it gives full details of what steps are performed:
\begin{center}
    \url{https://github.com/hughsalimbeni/bayesian_benchmarks}.
\end{center}

\subsection{Models, Optimization, and Initialization}

We use the same configuration and hyperparameters in all experiments, with the exception of the number of \dgp layers, number of importance samples, batch size, and number of training iterations. We give the configuration of the other hyperparameters below.

\paragraph{Latent Variable} All models considered in the empirical evaluation use latent variables concatenated to the input data.

\paragraph{Parameterization of $q(z)$} We use a fully connected neural network with two hidden layers of $20$ units each, skip connections between each layer, and the $\text{tanh}(\cdot)$ non-linearity. The network has two heads, for the mean and standard deviation of $q(z)$ respectively. Following \citet{salimbeni2019iwvi}, we add a bias of $-3$ to the standard deviation head to ensure that the output is small at initialization, and apply the softplus function to enforce that the output is positive. We initialize the weights using the scheme introduced by \citet{glorot2010understanding}.

The other hyperparameters and initialization schemes match those given by \citet{salimbeni2019iwvi}, which we reproduce below for completeness:

\paragraph{Kernels} RBF with ARD. Lengthscale initialized to the square root of the dimension.

\paragraph{Inducing points} 128 points per \dgp layer. Initialized to the data, if 128 data points or fewer. Otherwise initialized to the cluster centroids returned by kmeans2 from the SciPy package, with 128 points. The first \dgp layer has an additional input dimension due to the latent variable. The inducing points for this dimension are initialized to random draws from a standard normal distribution.

\paragraph{Linear projections between layers} We implement the linear mean functions and multioutput structure using a linear projection of five independent \gps concatenated with their inputs.
We initialized the projection matrix to the first five principle components of the data concatenated with the identity matrix.

\paragraph{Likelihood} We initialize the likelihood variance to $0.01$.

\paragraph{Parameterizations} All positive model parameters are constrained to be positive using the softplus function, clipped to $10^{-6}$. The variational parameters for the sparse \gp layers are parameterized by the mean and the square root of the covariance.

\paragraph{Optimization} For the final \dgp layer we use natural gradients on the natural parameters, with an initial step size of $0.01$. All other parameters are optimized using the Adam optimizer \citep{adamoptimizer} with all parameters set to their TensorFlow default values except the initial step size of $0.005$. We anneal the learning rate of both the Adam and natural gradient steps by a factor of $0.98$ per $1000$ iterations. The mean functions and kernel projection matrices (the P matrices that correlate the outputs) are not optimized.

\subsection{Details by Figure}
\paragraph{\Cref{fig:snr_reg_example}}
\begin{enumerate}
    \item Train a 2-layer model to convergence of the \elbo on the demo dataset.
    \item For a single point randomly chosen from the training set, take $Q=1000$ samples of the gradient estimator. This yields a $Q \times P$ tensor, where $P$ is the number of parameters of the variational distribution over the latent variable, $q(z)$.
    \item Compute the \snr, by taking the mean and standard deviation over the $Q$ dimension.
    \item Plot the \snr for $4$ randomly chosen parameters.
\end{enumerate}

\paragraph{\Cref{fig:gradient-histograms}}
Train a 2-layer model to convergence of the \elbo on the demo dataset. Randomly choose a single data point from the training set and a single parameter of $q(z)$. Sample gradient estimates for this parameter at the data point.

\paragraph{\Cref{fig:snr-grid}}
\begin{enumerate}
    \item Train models of depths $1,2,3,4$ to convergence of the \elbo.
    \item For $10$ randomly chosen data points in the training set:
    \begin{enumerate}
        \item Take $Q=1000$ samples of the gradient estimator yielding a $Q \times P$ tensor, where $P$ is the number of parameters of the variational distribution over the latent variable, $q(z)$.
        \item Compute the \snr, by taking the mean and standard deviation over the $Q$ dimension.
        \item Compute the mean \snr, over the $P$ dimension.
    \end{enumerate}
    \item Take the mean of the \snr over the $10$ points.
\end{enumerate}

\paragraph{\Cref{fig:learned_vs_fixed_q} and~\Cref{tab:reg_dreg_performance}}
We train for $300,000$ iterations.

We estimate the test log-likelihood using Monte Carlo sampling:
\begin{enumerate}
    \item For each point $(x_n y_n)$ in the test set:
    \begin{enumerate}
        \item Take $10000$ samples of $z_{n,k} \sim \calN(z \,|\, 0, I)$, sample $f_{n,k} \sim f(x_n, z_{n,k})$ for each $z_{n,k}$, evaluate the marginal likelihood $\calN(y_n \,|\, f_{n,k}, \sigma^2)$
        \item Take the mean of the sampled marginal likelihood values
    \end{enumerate}
    \item Take the mean of the marginal likelihood over all data points
\end{enumerate}

We find that some train/test splits had significantly worse performance, so we exclude them as outliers.
The criteria we use for this is a test log-likelihood more than two standard deviations from the mean for that dataset and model configuration.
We are careful to exclude the results for both \reg and \dreg, or $q(z)$ and $p(z)$, for a particular split and dataset, even if just one of them was an outlier.

\paragraph{\Cref{fig:reg_vs_fixq_marginals}}
We take the relevant trained models from~\Cref{fig:learned_vs_fixed_q}.
We evaluate each $10000$ times on one randomly selected point from the test set, and plot the resulting $y$ values.

\clearpage

\section{Additional results}
\label{appsec:additional_results}

\begin{figure}[ht]
    \centering
    \begin{subfigure}{0.49\textwidth}
    \includegraphics[width=0.99\linewidth, keepaspectratio]{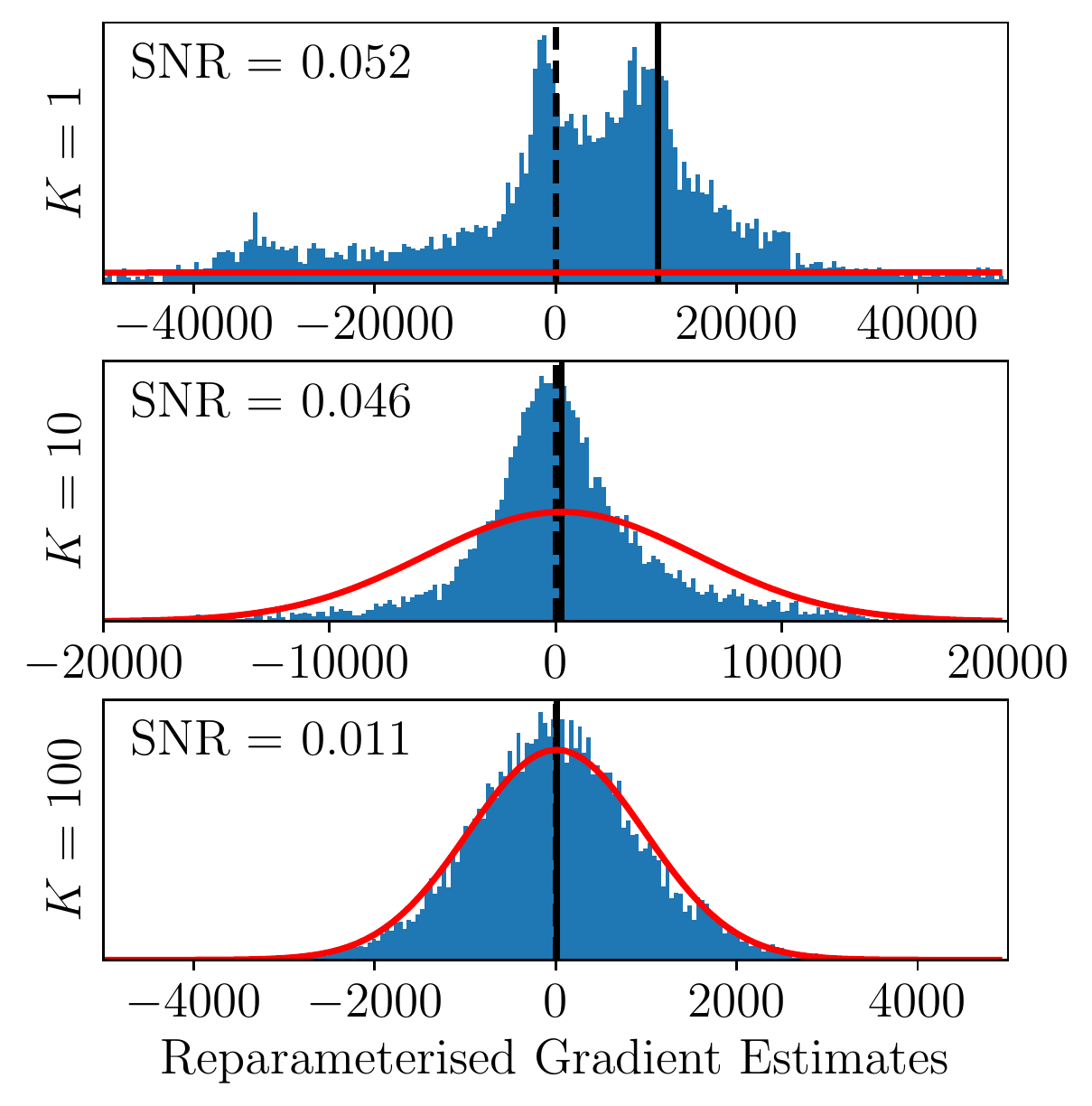}
    \caption{
    Histogram of \reg estimates.
    }
    \label{app-fig:gradient-histograms-dreg-off}
    \end{subfigure}\hfill
    \begin{subfigure}{0.49\textwidth}
    \centering
    \includegraphics[width=0.99\linewidth, keepaspectratio]{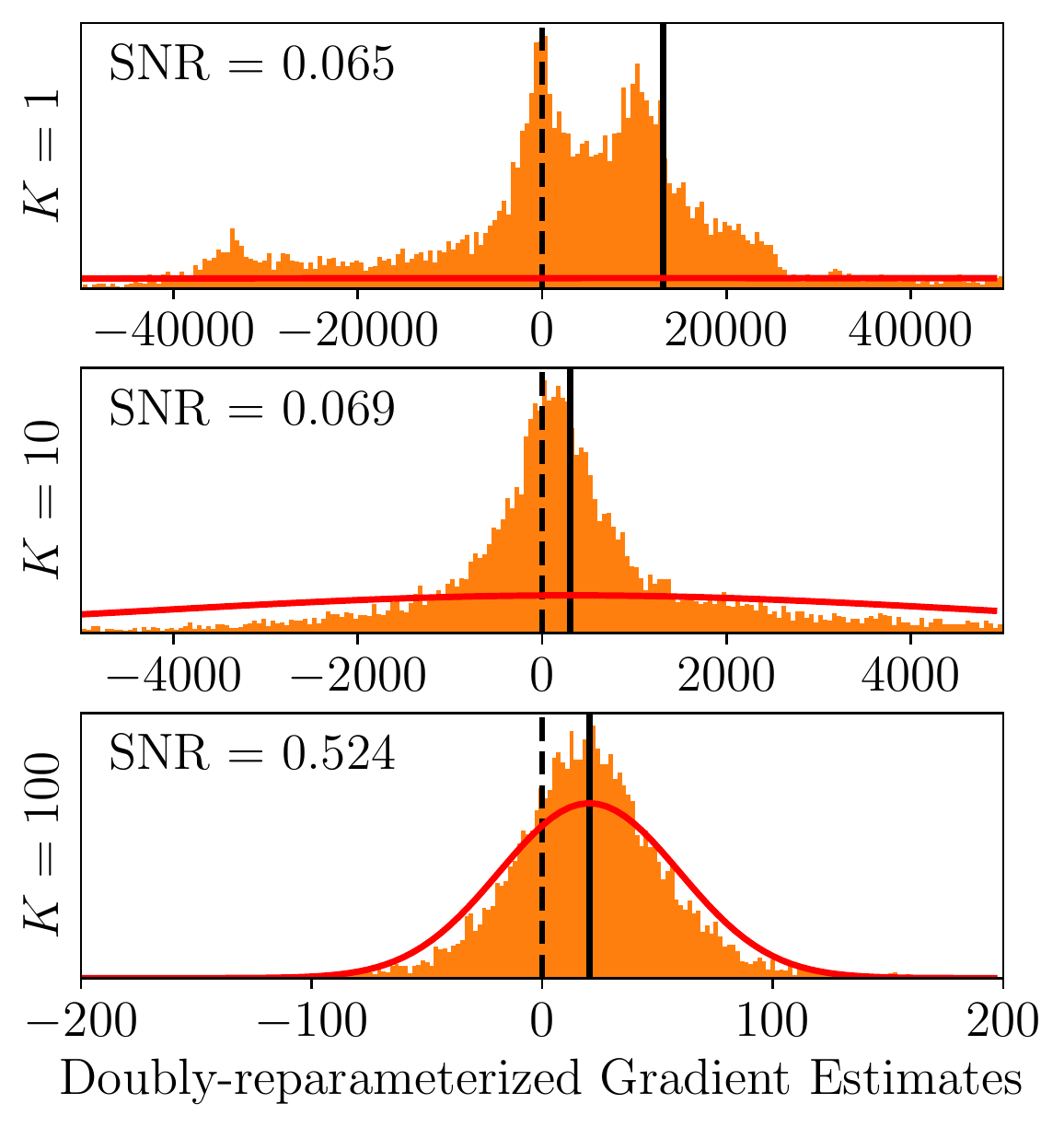}
    \caption{
    Histogram of \dreg estimates.
    }
    \label{app-fig:gradient-histograms-dreg-on}
    \end{subfigure}
    \caption{
    Histograms of \reg (a) and \dreg (b) estimates for varying numbers of importance samples, $K$, with \mbox{$K=1,10,100$}.
    The solid vertical lines denote the mean of the empirical distributions of gradient estimates, and the dashed vertical lines mark zero.
    The red curves denote Gaussian probability densities with the same mean and standard deviation as the empirical distributions.
    Note that for $K=1$ there are a number of very large outliers which we have truncated from the histogram.
    }
    \label{app-fig:gradient-histograms-sep-axes}
\end{figure}

\begin{figure}[ht!]
    \centering
    \includegraphics[width=\linewidth, keepaspectratio]{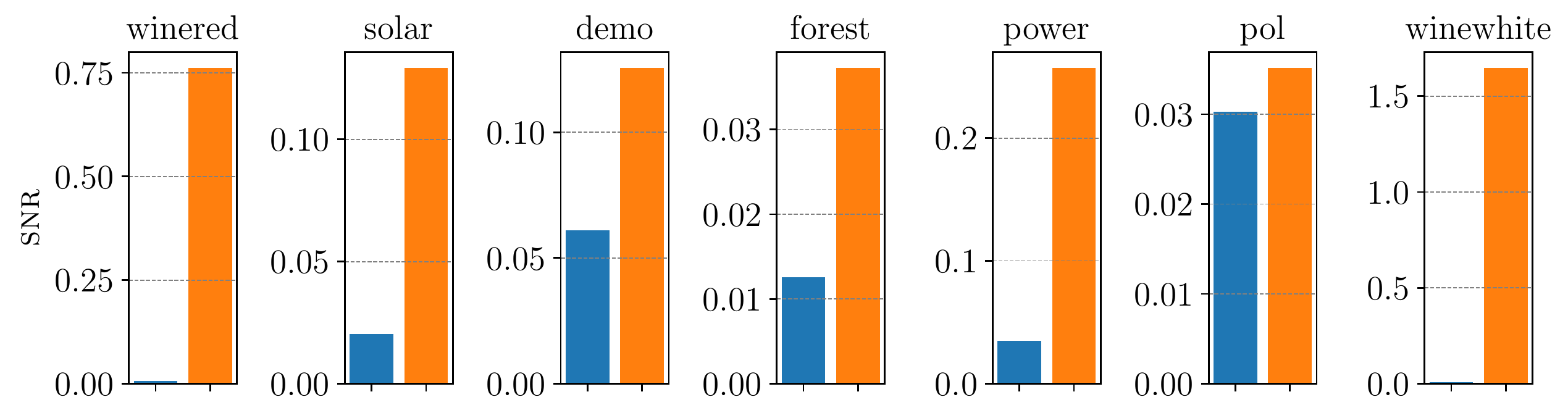}
    \caption{\snr of the \reg (left, blue) and \dreg (right, orange) estimates in the configuration used for~\Cref{tab:reg_dreg_performance}, confirming that the \dreg estimates have a substantially better \snr. The shaded areas show the interquartile range over two seeds.}
    \vspace{-40pt}
\end{figure}

\end{appendices}

\end{document}